\documentclass{article}
\usepackage[nonatbib,preprint]{arxiv}
\usepackage[utf8]{inputenc} %
\usepackage[T1]{fontenc}    %
\usepackage[colorlinks,citecolor=blue]{hyperref}       %
\usepackage{url}            %
\usepackage{booktabs}       %
\usepackage{amsfonts}       %
\usepackage{nicefrac}       %
\usepackage{microtype}      %
\usepackage{xcolor}         %
\usepackage{subcaption}
\usepackage{multirow} %
\usepackage{caption} %
\usepackage{hhline} %
\usepackage{tikz}
\usetikzlibrary{calc}
\usepackage{arydshln}
\usepackage{enumitem}
\setitemize{noitemsep,topsep=0.pt,parsep=0pt,partopsep=0pt}

\usepackage{amsmath,amssymb,amsfonts,bm,mathtools}
\usepackage{amsthm}
\usepackage{graphicx}
\usepackage{flexisym}

\newcommand{\veta}{\bm{\eta}}

\renewcommand{\d}{\mathrm{d}}
\theoremstyle{plain}
\newtheorem{theorem}{Theorem}[section]

\newtheorem{lemma}[theorem]{Lemma}
\newtheorem{corollary}[theorem]{Corollary}
\theoremstyle{definition}

\theoremstyle{remark}

\usepackage[ruled,linesnumbered]{algorithm2e}
\SetKwComment{Comment}{/* }{ */}
\usepackage{wasysym}
\usepackage{mathtools}
\usepackage{authblk}
\usepackage{algorithmic}
\usepackage{subcaption}
\usepackage{wrapfig}
\usepackage{resizegather}

\def\eqref#1{equation~\ref{#1}}

\numberwithin{equation}{section}

\def\1{\bm{1}}

\def\va{{\bm{a}}}
\def\vb{{\bm{b}}}

\def\vf{{\bm{f}}}

\def\vh{{\bm{h}}}

\def\vw{{\bm{w}}}
\def\vx{{\bm{x}}}
\def\vy{{\bm{y}}}

\DeclareMathAlphabet{\mathsfit}{\encodingdefault}{\sfdefault}{m}{sl}
\SetMathAlphabet{\mathsfit}{bold}{\encodingdefault}{\sfdefault}{bx}{n}

\def\gN{{\mathcal{N}}}

\newcommand{\E}{\mathbb{E}}

\newcommand{\R}{\mathbb{R}}

\newcommand{\sbr}[1]{\left[#1\right]}
\newcommand{\norm}[1]{\left\|#1\right\|}

\newcommand*{\horzbar}{\rule{5ex}{0.5pt}}

\DeclareMathOperator{\ber}{Ber}

\DeclareMathOperator{\crp}{corrupt}
\makeatletter
\setlength{\@fptop}{0pt}
\makeatother

\title{Ambient Diffusion: \\ Learning Clean Distributions from Corrupted Data}

\author{%
  \begin{minipage}[t]{0.33\linewidth}
    \centering
    Giannis Daras \\ \vspace{-1em}
    UT Austin \\
    \texttt{giannisdaras@utexas.edu}
  \end{minipage}%
  \begin{minipage}[t]{0.33\linewidth}
    \centering
    Kulin Shah\\ \vspace{-1em}
    UT Austin \\
    \texttt{kulinshah@utexas.edu}
  \end{minipage}%
  \begin{minipage}[t]{0.33\linewidth}
    \centering
    Yuval Dagan \\ \vspace{-1em}
    UC Berkeley \\
    \texttt{yuvald@berkeley.edu}
  \end{minipage} \\
  \begin{minipage}[t]{0.33\linewidth}
    \centering
    Aravind Gollakota \\
    UT Austin \\
    \texttt{aravindg@cs.utexas.edu}
  \end{minipage}%
  \begin{minipage}[t]{0.33\linewidth}
    \centering
    Alexandros G. Dimakis \\
    UT Austin \\
    \texttt{dimakis@austin.utexas.edu}
  \end{minipage}%
  \begin{minipage}[t]{0.33\linewidth}
    \centering
    Adam Klivans \\
    UT Austin \\
    \texttt{klivans@utexas.edu}
  \end{minipage}%
}

\usepackage[style=numeric, maxbibnames=15, maxcitenames=3, natbib=true,maxalphanames=10, backend=biber, sorting=nty, backref=true]{biblatex}
\addbibresource{references.bib}
\DefineBibliographyStrings{english}{%
  backrefpage = {page},%
  backrefpages = {pages},%
}

\begin{document}
\maketitle

\begin{abstract}
We present the first diffusion-based framework that can learn an unknown distribution using only highly-corrupted samples. This problem arises in scientific applications where access to uncorrupted samples is impossible or expensive to acquire. Another benefit of our approach is the ability to train generative models that are less likely to memorize individual training samples since they never observe clean training data.
 Our main idea is to introduce {\em additional measurement distortion} during the diffusion process and require the model to predict the original corrupted image from the further corrupted image.  We prove that our method leads to models that learn the conditional expectation of the full uncorrupted image given this additional measurement corruption.  This holds for any corruption process that satisfies some technical conditions (and in particular includes inpainting and compressed sensing).  We train models on standard benchmarks (CelebA, CIFAR-10 and AFHQ) and show that we can learn the distribution even when all the training samples have $90\%$ of their pixels missing. We also show that we can finetune foundation models on small corrupted datasets (e.g. MRI scans with block corruptions) and learn the clean distribution without memorizing the training set.
\end{abstract}

\begin{figure}[t!]
    \includegraphics[width=1.01\textwidth]{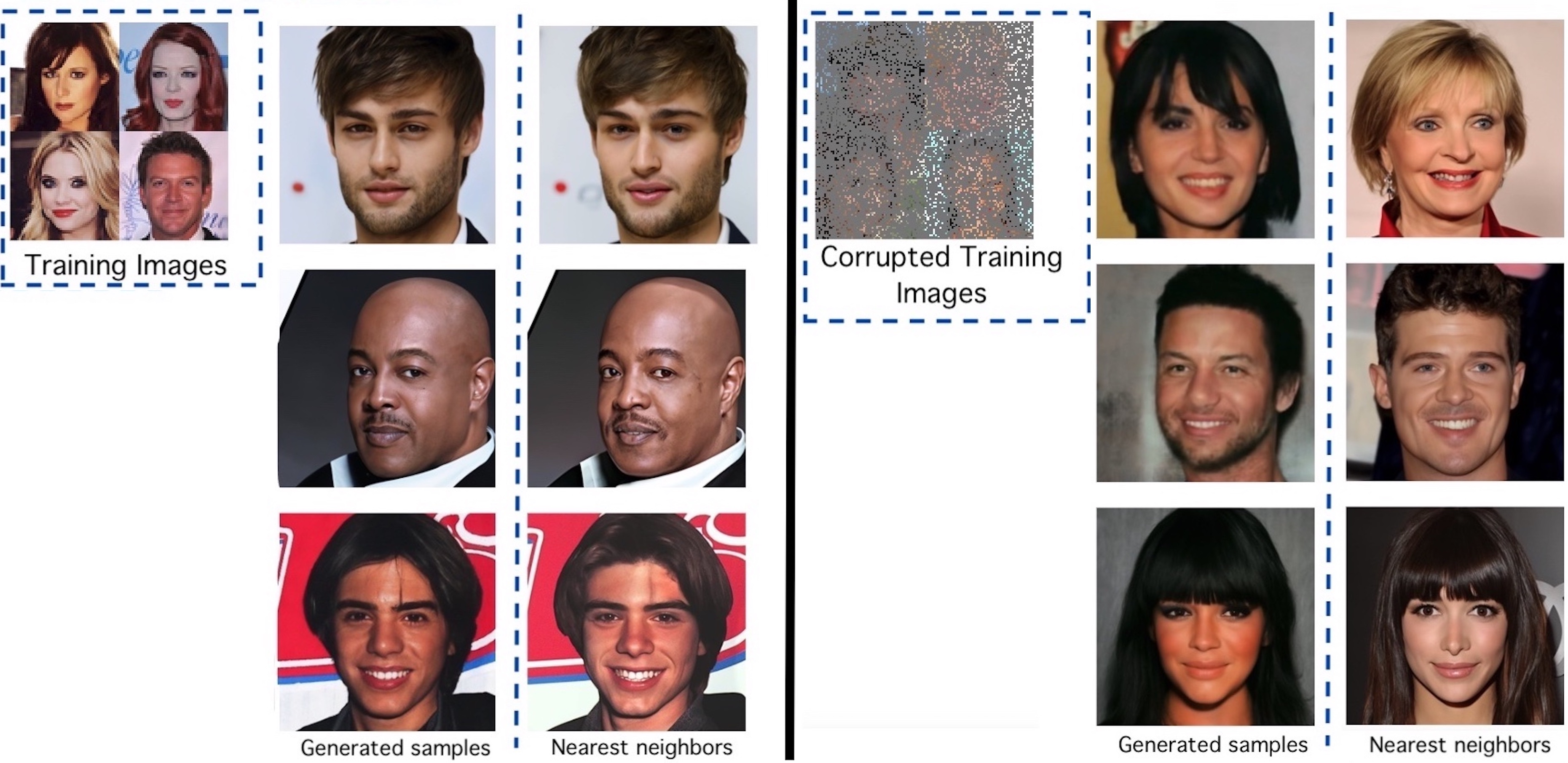}
    \caption{\textbf{Left panel:} Baseline method of vanilla finetuning Deepfloyd IF using $3000$ images from CelebA-HQ. We show generated sample images and nearest neighbors from the finetuning set. As shown, the generated samples are often near-identical copies from training data.  This verifies related work \citet{carlini_extracting_2021, somepalli2022diffusion, jagielski_measuring_2022} that pointed out that diffusions often generate training samples.
    \textbf{ Right panel:} We finetune the same foundation model (Deepfloyd IF) using our method and $3000$ highly corrupted training images. The corruption adds noise and removes $80$ percent random pixels. We show generated samples and nearest neighbors from the training set. Our method still learns the clean distribution of faces (with some quality deterioration, as shown) but does not memorize training data. We emphasize that our training is performed without ever accessing clean training data. }
    \label{fig1}
\end{figure}

\section{Introduction}

Diffusion generative models~\cite{sohl_thermodynamics, ddpm, ncsn} are emerging as versatile and powerful frameworks for learning high-dimensional distributions and solving inverse problems~\citep{ddrm, dps, kawar2021snips, mri_paper}. Numerous recent developments~\cite{ncsnv3,karras2022elucidating} have led to text conditional foundation models like Dalle-2~\cite{nichol_dalle_2022}, Latent Diffusion~\cite{rombach2022high} and Imagen~\cite{imagen} with incredible performance in general image domains. Training these models requires access to high-quality datasets which may be expensive or impossible to obtain. For example, direct images of black holes cannot be observed~\citep{Akiyama_2019, gao2023image}
and high-quality MRI images require long scanning times, causing patient discomfort and motion artifacts~\citep{mri_paper}.

Recently, \citet{carlini_extracting_2021, somepalli2022diffusion, jagielski_measuring_2022} showed that diffusion models can memorize examples from their training set. Further, an adversary can extract dataset samples given only query access to the model, leading to privacy, security and copyright concerns. 
For many applications, we may want to learn the distribution but not individual training images e.g. we might want to learn the distribution of X-ray scans but not memorize images of specific patient scans from the dataset. Hence, we may want to introduce corruption as a design choice. We show that it is possible to train diffusions that learn a distribution of clean data by only observing highly corrupted samples. 

\textbf{Prior work in supervised learning from corrupted data.} The traditional approach to solving such problems involves training a restoration model using supervised learning to predict the clean image based on the measurements~\citep{pathak2016context, richardson2020encoding, Yu_2019, Liu_2019}. The seminal Noise2Noise~\citep{noise2noise} work introduced a practical algorithm for learning how to denoise in the absence of any non-noisy images. This framework and its generalizations~\citep{batson2019noise2self, krull2019noise2void, tachella2022unsupervised} have found applications in electron microscopy~\citep{electron_microscopy}, tomographic image reconstruction~\citep{wang2020deep}, fluorescence image reconstruction~\citep{zhang2019poisson}, blind inverse problems~\citep{guo2019toward, batson2019noise2self}, monocular depth estimation and proteomics~\citep{bauerlein2021towards}. Another related line of work uses Stein's Unbiased Risk Estimate (SURE) to optimize an unbiased estimator of the denoising objective without access to non-noisy data~\citep{SURE}.
We stress that the aforementioned research works study the problem of \textit{restoration}, whereas are interested in the problem of \textit{sampling} from the clean distribution. Restoration algorithms based on supervised learning are only effective when the corruption level is relatively low~\citep{delbracio2023inversion}.
However, it might be either not possible or not desirable to reconstruct individual samples. 
Instead, the desired goal may be to learn to \textit{generate} fresh and completely unseen samples from the distribution of the uncorrupted data but \textit{without reconstructing individual training samples}. 

Indeed, for certain corruption processes, it is theoretically possible to perfectly learn a distribution only from highly corrupted samples (such as just random one-dimensional projections), even though individual sample denoising is usually impossible in such settings.
Specifically, AmbientGAN~\citep{bora2018ambientgan} showed that general $d$ dimensional distributions can be learned from \textit{scalar} observations, by observing only projections on one-dimensional random  Gaussian vectors, in the infinite training data  limit. The theory requires an infinitely powerful discriminator and hence does not apply to diffusion models.

\begin{figure}
    \centering
    \begin{minipage}{0.2\textwidth}
        \includegraphics[width=\textwidth]{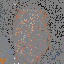}
        \caption*{Random inpainting}
    \end{minipage}
    \begin{minipage}{0.2\textwidth}
        \includegraphics[width=\textwidth]{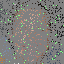}
        \caption*{Further corruption}
    \end{minipage}
    \hspace{4ex}
    \begin{minipage}{0.2\textwidth}
        \includegraphics[width=\textwidth]{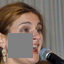}
        \caption*{Block inpainting}
    \end{minipage}
    \begin{minipage}{0.2\textwidth}
        \includegraphics[width=\textwidth]{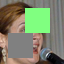}
        \caption*{Further corruption}
    \end{minipage}
    \caption{Illustration of our method: Given training data with deleted pixels, we corrupt further by erasing more (illustrated with green color). We feed the learner the further corrupted images and we evaluate it on the originally observed pixels. We can do this during training since the green pixel values are known to us. The score network learner has no way of knowing whether a pixel was missing from the beginning or whether it was corrupted by us. Hence, the score network learns to predict the clean image everywhere. Our method is analogous to grading a random subset of the questions in a test, but the students not knowing which questions will be graded. }
    \label{fig:method_illustration}
\end{figure}

\textbf{Our contributions.} We present the first diffusion-based framework to learn an unknown distribution ${\cal D}$ when the training set only contains highly-corrupted examples drawn from ${\cal D}$. Specifically, we consider the problem of learning to sample from the target distribution $p_0(\vx_0)$ given corrupted samples $A\vx_0$ where $A \sim p(A)$ is a random corruption matrix (with known realizations and prior distribution) and $\vx_0 \sim p_0(\vx_0)$.
Our main idea is to introduce {\em additional measurement distortion} during the diffusion process and require the model to predict the original corrupted image from the further corrupted image. \noindent
\begin{itemize}
        \item We provide an algorithm that provably learns $\E[\vx_0 | \tilde{A}(\vx_0 + \sigma_t \veta), \tilde{A}]$, for all noise levels $t$ and for $\tilde{A} \sim p(\tilde{A}\mid A)$ being a further corrupted version of $A$. The result holds for a general family of corruption processes $A \sim p(A)$. For various corruption processes, we show that the further degradation introduced by $\tilde{A}$ can be very small.
        \item We use our algorithm to train diffusion models on standard benchmarks (CelebA, CIFAR-10 and AFHQ) with training data at different levels of corruption.
        \item Given the learned conditional expectations we provide an approximate sampler for the target distribution $p_0(\vx_0)$.
        \item We show that for up to $90\%$ missing pixels, we can learn reasonably well the distribution of uncorrupted images.
        We outperform the previous state-of-the-art AmbientGAN~\citep{bora2018ambientgan} and natural baselines. 
        \item We show that our models perform on par or even outperform  state-of-the-art diffusion models for solving certain inverse problems even without ever seeing a clean image during training. Our models do so with a single prediction step while our baselines require hundreds of diffusion steps.
        \item We use our algorithm to finetune foundational pretrained diffusion models. Our finetuning 
        can be done in a few hours on a single GPU and we can use it to learn distributions with a few corrupted samples.
        \item We show that models trained on sufficiently corrupted data do not memorize their training set.  We measure the tradeoff between the amount of corruption (that controls the degree of memorization), the amount of training data and the quality of the learned generator. 
        \item We open-source our code and models: \href{https://github.com/giannisdaras/ambient-diffusion}{https://github.com/giannisdaras/ambient-diffusion}.
\end{itemize}
\section{Background}\label{sec:background}
Training a diffusion model involves two steps. First, we design a corruption process that transforms the data distribution gradually into a distribution that we can sample from~\citep{ncsnv3, daras2023soft}. Typically, this corruption process is described by an Ito SDE of the form:
$\d\vx = \vf(\vx, t)\mathrm{d}t +  g(t)\d\vw$,
where $\vw$ is the standard Wiener process. Such corruption processes are \textit{reversible} and the reverse process is also described by an Ito SDE~\citep{anderson}:
$\mathrm{d}\vx = \left(\vf(\vx, t) -g^2(t)\nabla_{\vx}\log p_t(\vx) \right)\mathrm{d}t + g(t)\mathrm{d}\vw$.
The designer of the diffusion model is usually free to choose the drift function $\vf(\bm{\cdot}, \cdot)$ and the diffusion function $g(\cdot)$. Typical choices are setting $\vf(\vx, t) = \bm{0}, g(t) = \sqrt{\frac{\d \sigma^2_t}{\d t}}$ (Variance Exploding SDE) or setting $\vf(x, t) = -\beta(t)\vx, g(t) = \sqrt{\beta(t)}$ (Variance Preserving SDE). Both of these choices lead to a Gaussian terminal distribution and are equivalent to a linear transformation in the input. 
The goal of diffusion model training is to learn the function $\nabla_{\vx} \log p_t(\vx)$, which is known as the score function. To simplify the presentation of the paper, we will focus on the Variance Exploding SDE that leads to conditional distributions $\vx_t = \vx_0 + \sigma_t\veta$.

\citet {vincent2011connection} showed that we can learn the score function at level $t$ by optimizing for the score-matching objective:
\begin{gather}
    J(\theta) = \frac{1}{2}\E_{(\vx_0, \vx_t)}\left|\left| \vh_{\theta}(\vx_t, t) - \vx_0\right|\right|^2.
\end{gather}
Specifically, the score function can be written in terms of the minimizer of this objective as:
\begin{gather}
    \nabla_{\vx_t}\log p_t(\vx_t) = \frac{\vh_{\theta^*}(\vx_t, t) - \vx_t}{\sigma_t}.
    \label{eq:tweedie-old}
\end{gather}
This result reveals a fundamental connection between the score-function and the best restoration model of $\vx_0$ given $\vx_t$, known as Tweedie's Formula~\citep{efron2011tweedie}. Specifically, the optimal $\vh_{\theta^*}(\vx_t, t)$ is given by $\E[\vx_0 | \vx_t]$, which means that \begin{gather}
    \nabla_{\vx_t}\log p_t(\vx_t) = \frac{\overbrace{\E[\vx_0 | \vx_t]}^{\text{best restoration}} - \ \vx_t}{\sigma_t}.
    \label{eq:tweedie}
\end{gather}

Inspired by this restoration interpretation of diffusion models, the Soft/Cold Diffusion works~\citep{daras2023soft, bansal2022cold} generalized diffusion models to look at non-Markovian corruption processes: $\vx_t = C_t\vx_0 + \sigma_t \veta$. Specifically, Soft Diffusion proposes the Soft Score Matching objective:
\begin{gather}\label{eq:soft-score}
    J_{\mathrm{soft}}(\theta) = \frac{1}{2}\E_{(\vx_0, \vx_t)}\left|\left| C_t\left(\vh_{\theta}(\vx_t, t) - \vx_0\right)\right|\right|^2,
\end{gather}
and shows that it is sufficient to recover the score function via a generalized Tweedie's Formula: \begin{gather}
    \nabla_{\vx_t}\log p_t(\vx_t) = \frac{C_t\E[\vx_0 | \vx_t] - \vx_t}{\sigma_t}.
\end{gather}
For these generalized models, the matrix $C_t$ is a design choice (similar to how we could choose the functions $\vf, g$). Most importantly, for $t=0$, the matrix $C_t$ becomes the identity matrix and the noise $\sigma_t$ becomes zero, i.e. we observe samples from the true distribution.

\section{Method}\label{sec:method}
As explained in the introduction, in many cases we do not observe uncorrupted images $\vx_0$, either by design (to avoid memorization and leaking of sensitive data) or because it is impossible to obtain clean data. Here we study the case where a learner only has access to linear measurements of the clean data, i.e. $\vy_0 = A\vx_0$, and the corruption matrices $A:\mathbb R^{m\times n}$. We note that we are interested in non-invertible corruption matrices. We ask two questions:
\begin{enumerate}
    \item Is it possible to learn $\E[\vx_0 | A(\vx_0 + \sigma_t\veta), A]$ for all noise levels $t$, given only access to corrupted samples $(\vy_0=A\vx_0, A)$?
    \item If so, is it possible to use this restoration model $\E[\vx_0 | A(\vx_0 + \sigma_t\veta), A]$ to recover $\E[\vx_0 | \vx_t]$ for any noise level $t$, and thus sample from the true distribution through the score function as given by Tweedie's formula (Eq.~\ref{eq:tweedie})?
\end{enumerate}
We investigate these questions in the rest of the paper.
For the first, the answer is affirmative but only after introducing additional corruptions, as we explain below.  For the second, at every time step $t$, we approximate $\E[\vx_0|\vx_t]$ directly using $\E[\vx_0|A\vx_t, A]$ (for a chosen $A$) and substitute it into Eq.~\ref{eq:tweedie}. Empirically, we observe that the resulting approximate sampler yields good results.

\subsection{Training}\label{subsec:training}
For the sake of clarity, we first consider the case of random inpainting. If the image $\vx_0$ is viewed as a vector, we can think of the matrix $A$ as a diagonal matrix with ones in the entries that correspond to the preserved pixels and zeros in the erased pixels. We assume that $p(A)$ samples a matrix where each entry in the diagonal is sampled i.i.d. with a probability $1-p$ to be $1$ and $p$ to be zero. %

We would like to train a function $\vh_\theta$ which receives a corruption matrix $A$ and a noisy version of a corrupted image, $\vy_t = A\underbrace{(\vx_0 + \sigma_t \veta)}_{\vx_t}$ where $\veta \sim \mathcal{N}(0,I)$, and produces an estimate for the conditional expectation.
The simplest idea would be to simply ignore the missing pixels and optimize for:
\begin{gather}\label{eq:naive-objective}
    J^{\mathrm{corr}}_{\mathrm{naive}}(\theta) =  \frac{1}{2}\E_{(\vx_0, \vx_t, A)}\left|\left| A\left(\vh_{\theta}(A, A\vx_t, t) - \vx_0\right)\right|\right|^2,
\end{gather}
Despite the similarities with Soft Score Matching (Eq~\ref{eq:soft-score}), this objective will not learn the conditional expectation. The reason is that the learner is never penalized for performing arbitrarily poorly in the missing pixels. Formally, any function $\vh_{\theta'}$ satisfying $A \vh_{\theta'}(A, \vy_t, t) = A \E[\vx_0 | A\vx_t, A]$ is a minimizer. %

Instead, we propose 
to \emph{further corrupt} the samples before feeding them to the model, and ask the model to predict the original corrupted sample from the further corrupted image.

Concretely, we randomly corrupt $A$ to obtain $\tilde{A} = BA$ for some matrix $B$ that is selected randomly given $A$. 
In our example of missing pixels, $\tilde{A}$ is obtained from $A$ by randomly erasing an additional fraction $\delta$ of the pixels that survive after the corruption $A$. Here, $B$ will be diagonal where each element is $1$ with probability $1-\delta$ and $0$ w.p. $\delta$. We will penalize the model on recovering all the pixels that are visible in the sample $A\vx_0$: this includes both the pixels that survive in $\tilde{A}\vx_0$ and those that are erased by $\tilde{A}$. 
The formal training objective is given by minimizing the following loss:
\begin{equation}\label{eq:opt}
    J^{\mathrm{corr}}(\theta) =  \frac{1}{2}\E_{(\vx_0, \vx_t, A, \tilde{A})}\left|\left| A\left(\vh_{\theta}(\tilde{A}, \tilde{A}\vx_t, t) - \vx_0\right)\right|\right|^2,
\end{equation}

The key idea behind our algorithm is as follows: the learner does not know if a missing pixel is missing because we never had it (and hence do not know the ground truth) or because it was deliberately erased as part of the further corruption (in which case we do know the ground truth). Thus, the best learner cannot be inaccurate in the unobserved pixels because with non-zero probability it might be evaluated on some of them. Notice that the trained model behaves as a denoiser in the observed pixels and as an inpainter in the missing pixels. We also want to emphasize that the probability $\delta$ of further corruption can be arbitrarily small as long as it stays positive.

The idea of further corruption can be generalized from the case of random inpainting to a much broader family of corruption processes. For example, if $A$ is a random Gaussian matrix with $m$ rows, we can form $\tilde{A}$ by deleting one row from $A$ at random. If $A$ is a block inpainting matrix (i.e. a random block of fixed size is missing from all of the training images), we can create $\tilde{A}$ by corrupting further with one more non-overlapping missing block. Examples of our further corruption are shown in Figure \ref{fig:method_illustration}. In our Theory Section, we prove conditions under which it is possible to recover $\E[\vx_0 | \tilde{A}\vx_t, \tilde{A}]$ using our algorithm and samples $(\vy_0=A\vx_0, A)$. Our goal is to satisfy this condition while adding minimal further corruption, i.e. while keeping $\tilde{A}$ close to $A$.

\subsection{Sampling}

\textbf{Fixed mask sampling.} To sample from $p_0(\vx_0)$ using the standard diffusion formulation, we need access to $\nabla_{\vx_t} \log p_t(\vx_t)$, which is equivalent to having access to $\E[\vx_0|\vx_t]$ (see Eq. \ref{eq:tweedie}). Instead, our model is trained to predict $\E[\vx_0 | \tilde{A}\vx_t, \tilde{A}]$ for all matrices $A$ in the support of $p(A)$.

We note that for random inpainting, the identity matrix is technically in the support of $p(A)$. However, if the corruption probability $p$ is at least a constant, the probability of seeing the identity matrix is exponentially small in the dimension of $\vx_t$. Hence, we should not expect our model to give good estimates of $\E[\vx_0 | \tilde{A}\vx_t, \tilde{A}]$ for corruption matrices $A$ that belong to the tails of the distribution $p(A)$. %

The simplest idea is to sample a mask $\tilde{A} \sim p(\tilde{A})$ and approximate $\E[\vx_0|\vx_t]$ with $\E[\vx_0| \tilde{A}\vx_t, \tilde{A}]$. Under this approximation, the discretized sampling rule becomes:

\begin{gather}
    \vx_{t - \Delta t} =
    \underbrace{\frac{\sigma_{t - \Delta t}}{\sigma_t}}_{\gamma_t}\vx_t + \underbrace{\frac{\sigma_t - \sigma_{t - \Delta t}}{\sigma_t}}_{1 - \gamma_t}\underbrace{\E[\vx_0 | \tilde{A}\vx_t, \tilde{A}]}_{\hat x_0}.
    \label{eq:fixed_mask_update_rule}
\end{gather}

This idea works surprisingly well. Unless mentioned otherwise, we use it for all the experiments in the main paper and we show that we can generate samples that are reasonably close to the true distribution (as shown by metrics such as FID and Inception) even with $90\%$ of the pixels missing.

\textbf{Sampling with Reconstruction Guidance.} 
In the Fixed Mask Sampler, at any time $t$, the prediction is a convex combination of the current value and the predicted denoised image. As $t\to 0$, $\gamma_t \to 0$. Hence, for the masked pixels, the fixed mask sampler outputs the conditional expectation of their value given the observed pixels. This leads to averaging effects as the corruption gets higher. To correct this problem, we add one more term in the update: the Reconstruction Guidance term.
The issue with the previous sampler is that the model never sees certain pixels. We would like to evaluate the model using different masks. However, the model outputs for the denoised image might be very different when evaluated with different masks. To account for this problem, we add an additional term that enforces updates that lead to consistency on the reconstructed image. The update of the sampler with Reconstruction Guidance becomes:

\begin{gather}
    \vx_{t- \Delta t} = \gamma_t\vx_t + (1-\gamma_t)\E[\vx_0 | \tilde{A}\vx_t, \tilde{A}] - w_t\nabla_{\vx_t}\E_{A'} || \E[\vx_0 | \tilde{A}\vx_t, \tilde{A}] - \E[\vx_0|\tilde{A}'\vx_t, \tilde{A}']||^2.
\end{gather}

This sampler is inspired by the Reconstruction Guidance term used in Imagen~\citep{ho2022imagen} to enforce consistency and correct for the sampling drift caused by imperfect score matching~\citep{daras2023consistent}. We see modest improvements over the Fixed Mask Sampler for certain corruption ranges. We ablate this sampler in the Appendix, Section \ref{sec:sampling_ablation}.

In the Appendix, Section \ref{sec:reduction}, we also prove that in theory, whenever it is possible to reconstruct $p_0(\vx_0)$ from corrupted samples, it is also possible to reconstruct it using access to $\E[\vx_0|A\vx_t, A]$. However, as stated in the Limitations section, we were not able to find any practical algorithm to do so.

\section{Theory}
\label{sec:theory}

As elaborated in Section~\ref{sec:method}, one of our key goals is to learn the best restoration model for the measurements at all noise levels, i.e., the function $
    \vh(A, \vy_t, t) = \E[\vx_0 | \vy_t, A].$
We now show that under a certain assumption on the distribution of $A$ and $\tilde{A}$, the true population minimizer of Eq.~\ref{eq:opt} is indeed essentially of the form above. This assumption formalizes the notion that even conditional on $\tilde{A}$, $A$ has considerable variability, and the latter ensures that the best way to predict $A\vx_0$ as a function of $\tilde{A}\vx_t$ and $\tilde{A}$ is to optimally predict $\vx_0$ itself. All proofs are deferred to the Appendix.

\begin{theorem}\label{thm:minimizer}
    Assume a joint distribution of corruption matrices $A$ and further corruption $\tilde{A}$. If for all $\tilde{A}$ in the support it holds that $\E_{A|\tilde{A}}[A^T A]$ is full-rank, then the unique minimizer of the objective in \eqref{eq:opt} is given by
    \begin{equation}\label{eq:ahat-minimizer}
        \vh_{\theta^*}(\tilde A, \vy_t, t ) = \mathbb{E}[ \vx_0 \mid \tilde{A}\vx_t, \tilde{A} ]
    \end{equation}
\end{theorem}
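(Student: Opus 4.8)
The plan is to treat the population objective in \eqref{eq:opt} as an infinite-dimensional least-squares problem and minimize it pointwise over all measurable functions $\vh$. Fix the noise level $t$. Since $\vh$ may depend arbitrarily (measurably) on $(\tilde A,\tilde A\vx_t,t)$, set $\gG := \sigma(\tilde A,\,\tilde A\vx_t)$ and condition:
\[
  J^{\mathrm{corr}}(\vh) \;=\; \tfrac12\,\E\Big[\, \E\big[\, \|A(\vh(\tilde A,\tilde A\vx_t,t) - \vx_0)\|^2 \;\big|\; \gG \,\big] \Big].
\]
It therefore suffices to minimize, for each realization of $(\tilde A,\vy_t)$, the inner conditional expectation over the single vector $\vv := \vh(\tilde A,\vy_t,t)\in\R^n$. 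Expanding $\|A(\vv-\vx_0)\|^2 = (\vv-\vx_0)^T A^T A(\vv-\vx_0)$ and using that $\vv$ is $\gG$-measurable makes the inner expectation the quadratic form
\[
  \vv^T \underbrace{\E[A^T A\mid \gG]}_{=:\ \Phi}\, \vv \;-\; 2\,\vv^T\, \E[A^T A\,\vx_0 \mid \gG] \;+\; \E[\vx_0^T A^T A\,\vx_0 \mid \gG].
\]

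The crux is a conditional-independence observation. In the setup, the corruption matrices $(A,\tilde A)$ are drawn independently of the data $(\vx_0,\veta)$, so $A \perp (\vx_0,\veta)\mid \tilde A$. For a fixed $\tilde A$, the observed vector $\tilde A\vx_t = \tilde A(\vx_0 + \sigma_t\veta)$ is a deterministic function of $(\vx_0,\veta)$ alone; conditioning additionally on it therefore cannot couple $A$ to the data, i.e.\ $A \perp (\vx_0,\veta)\mid\gG$, and moreover the conditional law of $A$ given $\gG$ equals its law given $\tilde A$. Two consequences follow: (i) $\Phi = \E[A^T A\mid\gG] = \E_{A\mid\tilde A}[A^T A]$, which is positive definite, being an average of the PSD matrices $A^T A$ and full-rank by hypothesis; and (ii) $\E[A^T A\,\vx_0\mid\gG] = \E[A^T A\mid\gG]\,\E[\vx_0\mid\gG] = \Phi\,\E[\vx_0\mid\gG]$.

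By (i) the quadratic is strictly convex, with unique minimizer $\vv^* = \Phi^{-1}\,\E[A^T A\,\vx_0\mid\gG]$; by (ii) this equals $\Phi^{-1}\Phi\,\E[\vx_0\mid\gG] = \E[\vx_0 \mid \tilde A\vx_t, \tilde A]$. Since this pins down $\vh$ at a.e.\ $(\tilde A,\vy_t)$, the population minimizer of \eqref{eq:opt} is unique and equal to \eqref{eq:ahat-minimizer}. The step I expect to demand the most care is the conditional-independence argument --- specifically, cleanly justifying that conditioning on $\tilde A\vx_t$ (a function of the data once $\tilde A$ is fixed) leaves the conditional distribution of $A$ unchanged; this is exactly why the theorem's hypothesis is naturally phrased in terms of $\E_{A\mid\tilde A}[A^T A]$ rather than $\E[A^T A\mid \tilde A,\tilde A\vx_t]$. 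Minor points to dispatch en route: $\vx_0$ needs a finite second moment for the expectations to be well defined, and distinct noise levels $t$ decouple in the objective, so they may be handled one at a time.
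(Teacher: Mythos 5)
Your proof is correct and is essentially the same argument as the paper's: both reduce to the fact that, conditional on $(\tilde A,\tilde A\vx_t)$, the loss is a quadratic form in $\vh$ weighted by $\E_{A\mid\tilde A}[A^TA]$, whose positive definiteness (full rank plus PSD) forces the unique minimizer $\E[\vx_0\mid\tilde A\vx_t,\tilde A]$. The only difference is presentational — you solve the conditional quadratic directly via its first-order condition, while the paper adds and subtracts the claimed optimum and shows the cross term vanishes by the tower law — and you make explicit the conditional-independence step ($A\perp(\vx_0,\veta)$ given $(\tilde A,\tilde A\vx_t)$) that the paper uses implicitly, which is a welcome clarification.
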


Two simple examples that fit into this framework (see Corollaries \ref{cor:inpainting} and \ref{cor:gaussian} in the Appendix) are: 
\begin{itemize}
    \item Inpainting: $A \in \R^{n \times n}$ is a diagonal matrix where each entry $A_{ii} \sim \ber(1-p)$ for some $p > 0$ (independently for each $i$), and the additional noise is generated by drawing $\tilde{A}|A$ such that $\tilde{A}_{ii} = A_{ii} \cdot \ber(1-\delta)$ for some small $\delta > 0$ (again independently for each $i$).\footnote{$\ber(q)$ indicates a Bernoulli random variable with a probability of $q$ to equal $1$ and $1-q$ for $0$.} 
    \item Gaussian measurements: $A \in \R^{m \times n}$ consists of $m$ rows drawn independently from $\gN(0, I_n)$, and $\tilde{A} \in \R^{m \times n}$ is constructed conditional on $A$ by zeroing out its last row.
\end{itemize}

Notice that the minimizer in Eq~\ref{eq:ahat-minimizer} is not entirely of the form we originally desired, which was $\vh(A, \vy_t, t) = \mathbb{E}[ \vx_0 \mid A\vx_t, A]$. In place of $A$, we now have $\tilde{A}$, which is a further degraded matrix. Indeed, one trivial way to satisfy the condition in Theorem~\ref{thm:minimizer} is by forming $\tilde{A}$ completely independently of $A$, e.g.\ by always setting $\tilde{A} = 0$. However, in this case, the function we learn is not very useful. For this reason, we would like to add as little further noise as possible and ensure that $\tilde{A}$ is close to $A$. In natural noise models such as the inpainting noise model, by letting the additional corruption probability $\delta$ approach $0$, we can indeed ensure that $\tilde{A}$ follows a distribution very close to that of $A$. %

\section{Experimental Evaluation}
\subsection{Training from scratch on corrupted data}
Our first experiment is to train diffusion models from scratch using corrupted training data at different levels of corruption. The corruption model we use for these experiments is random inpainting: we form our dataset by deleting each pixel with probability $p$. To create the matrix $\tilde{A}$, we further delete each row of $A$ with probability $\delta$ -- this removes an additional $\delta$-fraction of the surviving pixels. Unless mentioned otherwise, we use $\delta=0.1$. 
We train models on CIFAR-10, AFHQ, and CelebA-HQ. All our models are trained with corruption level $p \in \{0.0, 0.2, 0.4, 0.6, 0.8, 0.9\}$. We use the EDM~\citep{karras2022elucidating} codebase to train our models. We replace convolutions with Gated Convolutions~\citep{gated_conv} which are known to perform better for inpainting-type problems. To use the mask $\tilde A$ as an additional input to the model, we simply concatenate it with the image $\vx$. The full training details can be found in the Appendix, Section \ref{sec:training_details}.

\begin{table}[!htp]
\centering
\begin{tabular}{@{}cc|ccccc@{}}
\toprule
\textbf{Dataset} & \textbf{Corruption Probability} & \textbf{Method} & \textbf{LPIPS} & \textbf{PSNR} & \textbf{NFE} \\ \midrule
CelebA-HQ 
& \multirow{5}{*}{0.6} & Ours & $\bm{0.037}$ & $\bm{31.51}$ & 1\\ \cline{3-6}
&  & DPS & 0.053 & 28.21 & 100 \\ \cline{3-6}
&  & \multirow{3}{*}{DDRM} & 0.139 & 25.76 & 35\\
&  &  & 0.088 & 27.38 & 99\\
&  &  & 0.069 & 28.16 & 199\\
\hline
& \multirow{5}{*}{0.8} & Ours & $\bm{0.084}$ & $\bm{26.80}$ & 1\\ \cline{3-6}
&  & DPS & 0.107 & 24.16 & 100 \\ \cline{3-6}
&  & \multirow{3}{*}{DDRM} & 0.316 & 20.37 & 35\\
&  &  & 0.188 & 22.96 & 99\\
&  &  & 0.153 & 23.82 & 199\\
\hline
& \multirow{5}{*}{0.9} & Ours & $\bm{0.152}$ & $\bm{23.34}$ & 1\\ \cline{3-6}
&  & DPS & 0.168 & 20.89 & 100 \\ \cline{3-6}
&  & \multirow{3}{*}{DDRM} & 0.461 & 15.87 & 35\\
&  &  & 0.332 & 18.74 & 99\\
&  &  & 0.242 & 20.14 & 199\\
\hline \midrule
AFHQ 
& \multirow{5}{*}{0.4} & Ours & 0.030 & 33.27 & 1\\ \cline{3-6}
&  & DPS & $\bm{0.020}$ & $\bm{34.06}$ & 100\\ \cline{3-6}
&  & \multirow{3}{*}{DDRM} & 0.122 & 25.18 & 35 \\
&  &  & 0.091 & 26.42 & 99 \\
&  &  & 0.088 & 26.52 & 199 \\
\hline
& \multirow{5}{*}{0.6} & Ours & 0.062 & 29.46 & 1\\ \cline{3-6}
&  & DPS & $\bm{0.051}$ & $\bm{30.03}$ & 100\\ \cline{3-6}
&  & \multirow{3}{*}{DDRM} & 0.246 & 20.76 & 35\\ 
&  &  & 0.166 & 22.79 & 99\\ 
&  &  & 0.160 & 22.93 & 199\\ 
\hline 
& \multirow{5}{*}{0.8} & Ours & 0.124 & $\bm{25.37}$ & 1\\ \cline{3-6}
&  & DPS & $\bm{0.107}$ & 25.30 & 100\\  \cline{3-6}
&  & \multirow{3}{*}{DDRM} & 0.525 & 14.56 & 35\\
&  &  & 0.295 & 18.08 & 99\\
&  &  & 0.258 & 18.86 & 199\\
\end{tabular}
\caption{Comparison of our model (trained on corrupted data) with state-of-the-art diffusion models on CelebA (DDIM~\citep{ddim} model) and AFHQ (EDM~\citep{karras2022elucidating} model) for solving the random inpainting inverse problem. Our model performs on par with state-of-the-art diffusion inverse problem solvers, even though it has never seen uncorrupted training data. Further, this is achieved with a single score function evaluation. To solve this problem with a standard pre-trained diffusion model we need to use a reconstruction algorithm (such as DPS~\citep{dps} or DDRM~\citep{ddrm}) that typically requires hundreds of steps.}
\label{table:restoration}
\end{table}

We first evaluate the restoration performance of our model for the task it was trained on (random inpainting and noise). We compare with state-of-the-art diffusion models that were trained on clean data. Specifically, for AFHQ we compare with the state-of-the-art EDM model~\citep{karras2022elucidating} and for CelebA we compare with DDIM~\citep{ddim}. These models were not trained to denoise, but we can use the prior learned in the denoiser as in \citep{pnp, kadkhodaie2020solving} to solve any inverse problem. We experiment with the state-of-the-art reconstruction algorithms: DDRM~\citep{ddrm} and DPS~\cite{dps}. 

We summarize the results in Table \ref{table:restoration}. Our model performs similarly to other diffusion models, even though it has never been trained on clean data. Further, it does so by requiring only one step, while all the baseline diffusion models require hundreds of steps to solve the same task with inferior or comparable performance.  The performance of DDRM improves with more function evaluations at the cost of more computation. For DPS, we did not observe significant improvement by increasing the number of steps to more than $100$. We include results with noisy inpainted measurements and comparisons with a supervised method in the Appendix, Section \ref{sec:additional_experiments}, Tables \ref{table:restoration-measurement-noise}, \ref{table:comp_supervised_methods}.
We want to emphasize that all the baselines we compare against have an advantage: they are trained on \textit{uncorrupted} data. Instead, our models were only trained on corrupted data. This experiment indicates that: i) our training algorithm for learning the conditional expectation worked and ii) that the choice of corruption that diffusion models are trained to reverse matters for solving inverse problems.

\begin{figure}[!t]
    \begin{minipage}{0.5\textwidth}
    \includegraphics[width=\textwidth]   {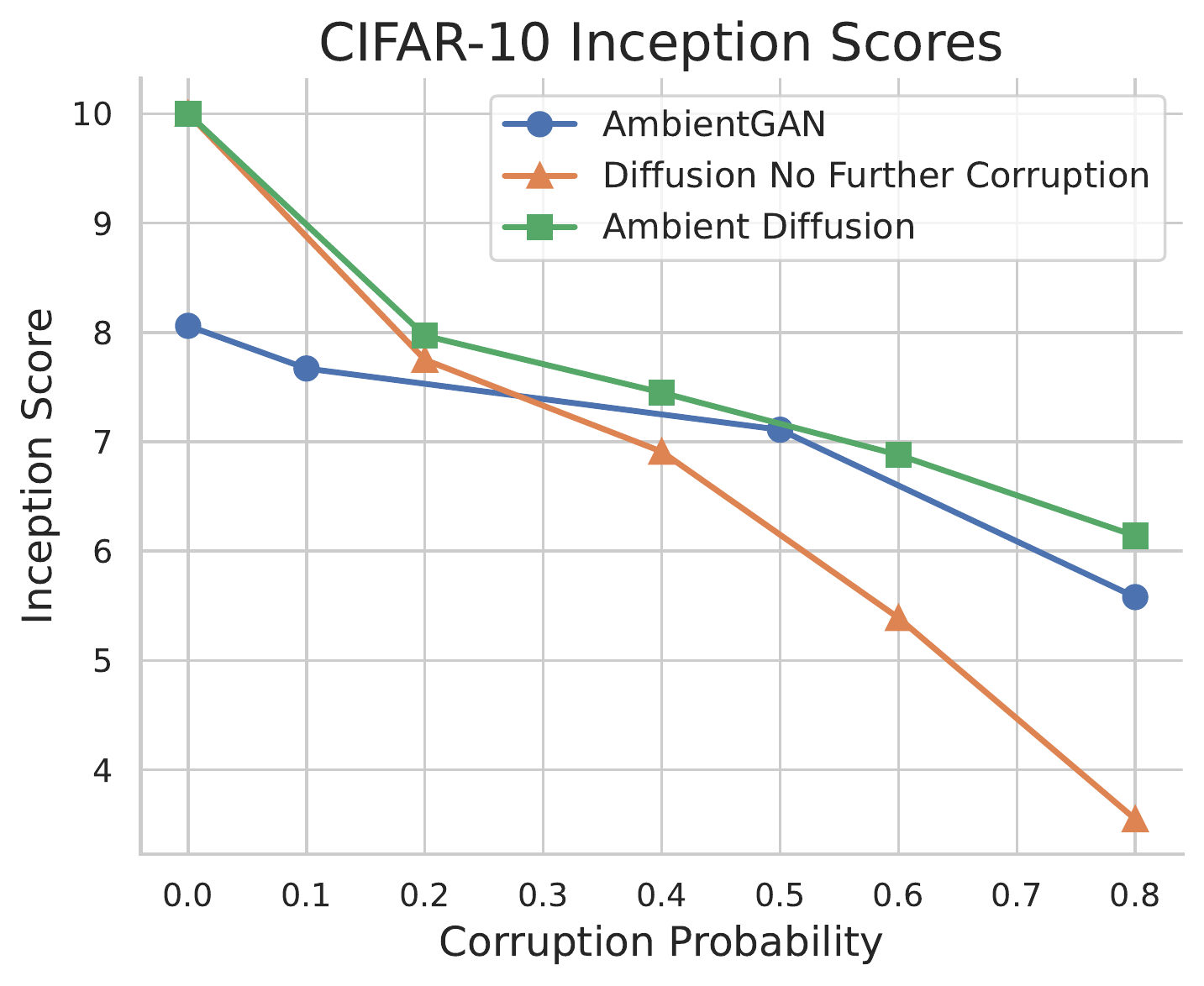}
    \caption{\small Performance on CIFAR-10 as a function of the corruption level. We compare our method with a diffusion model trained without our further corruption trick and AmbientGAN~\citep{bora2018ambientgan}. Ambient Diffusion outperforms both baselines for all ranges of corruption levels.}
    \label{fig:cifar10_inception}
    \end{minipage}
    \hspace{2ex}
    \begin{minipage}{0.45\textwidth}
    \scriptsize
    \begin{tabular}{@{}ccccc@{}}
    \toprule
    \textbf{Dataset} & \textbf{Corruption Probability} & \textbf{FID} & \textbf{Inception Score} 
    \\ \midrule
    \multirow{5}{*}{CelebA-HQ} & 0.0 & 3.26 & \multirow{5}{*}{N/A} &  \\  %
    & 0.2 &  4.18 &  &\\
    & 0.6 &  6.08 &
    \\
    & 0.8 &  11.19 &
    \\
    & 0.9 &  25.53 &
    \\
    \midrule[1pt]
    \multirow{7}{*}{AFHQ} & 0.0 & 2.41 & \multirow{7}{*}{N/A} \\
    & 0.2 &  4.47 &   \\ %
    & 0.4 &  6.96	&  
    \\
    & 0.6 &  10.11 &
    \\
    & 0.8 &  16.78 &
    \\
    & 0.9 &  41.00  & \\ %
    \midrule[1pt]
    \multirow{5}{*}{CIFAR-10} & 0.0 & 1.85 & 9.94 
    \\
    & 0.2 & 11.70 & 7.97 
    \\
    & 0.4 &  18.85 & 7.45 
    \\
    & 0.6 & 28.88 & 6.88 
    \\
    & 0.8 & 46.27 & 6.14  
    \\
    \end{tabular}
    \caption{\small Inception/FID results on random inpainting for models trained with our algorithm on CelebA-HQ, AFHQ and CIFAR-10.}
    \label{table:fids_table}
    \end{minipage}
\end{figure}

Next, we evaluate the performance of our diffusion models as generative models. To the best of our knowledge, the only generative baseline with quantitative results for training on corrupted data is AmbientGAN~\citep{bora2018ambientgan} which is trained on CIFAR-10. We further compare with a diffusion model trained without our further corruption algorithm. We plot the results in Figure \ref{fig:cifar10_inception}. The diffusion model trained without our further corruption algorithm performs well for low corruption levels but collapses entirely for high corruption. Instead, our model trained with further corruption maintains reasonable corruption scores even for high corruption levels, outperforming the previous state-of-the-art AmbientGAN for all ranges of corruption levels.

For CelebA-HQ and AFHQ we could not find any generative baselines trained on corrupted data to compare against. Nevertheless, we report FID and Inception Scores and summarize our results in Table \ref{table:fids_table} to encourage further research in this area. As shown in the Table, for CelebA-HQ and AFHQ, we manage to maintain a decent FID score even with $90\%$ of the pixels deleted. For CIFAR-10, the performance degrades faster, potentially because of the lower resolution of the training images.

\subsection{Finetuning foundation models on corrupted data}
\newcommand{\rulesep}{\unskip\ \vrule\ } %
\begin{figure}[ht]
    \centering 
    \includegraphics[width=0.3\textwidth]{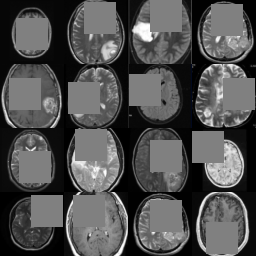}
    \rulesep
    \includegraphics[width=0.3\textwidth]{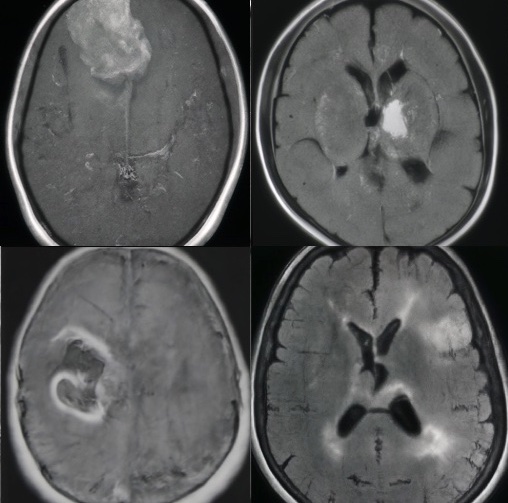}
    \caption{\small \textbf{Left panel:} We finetune Deepfloyd's IF diffusion model to make it a generative model for MRI images of brains with tumors. We use a small dataset~\cite{brain-tumor-dataset} of only $155$ images that was corrupted by removing large blocks as shown.   
\textbf{ Right panel:}  Generated samples from our finetuned model. As shown, the model learns the statistics of full brain tumor MRI images. The training set was resized to $64\times 64$ but the generated images are at $256 \times 256$. The higher resolution is obtained by simply leveraging the power of the cascaded IF model.}
    \label{fig-mri}
\end{figure}
We can apply our technique to finetune a foundational diffusion model. For all our experiments, we use Deepfloyd's IF model~\citep{if_repo}, which is one of the most powerful open-source diffusion generative models available. We choose this model over Stable Diffusion~\citep{ldm} because it works in the pixel space (and hence our algorithm directly applies).

\paragraph{Memorization.} We show that we can finetune a foundational model on a limited dataset without memorizing the training examples. This experiment is motivated by the recent works of \citet{carlini_extracting_2021, somepalli2022diffusion, jagielski_measuring_2022} that show that diffusion generative models memorize training samples and they do it significantly more than previous generative models, such as GANs, especially when the training dataset is small. Specifically, \citet{somepalli2022diffusion} train diffusion models on subsets of size $\{300, 3000, 30000\}$ of CelebA and they show that models trained on $300$ or $3000$ memorize and blatantly copy images from their training set.

We replicate this training experiment by finetuning the IF model on a subset of CelebA with $3000$ training examples. Results are shown in Figure \ref{fig1}. Standard finetuning of Deepfloyd's IF on $3000$ images memorizes samples and produces almost exact copies of the training set. Instead, if we corrupt the images by deleting $80\%$ of the pixels prior to training and finetune, the memorization decreases sharply and there are distinct differences between the generated images and their nearest neighbors from the dataset. This is in spite of finetuning until convergence.

\begin{figure}[!htp]
    \centering
    \includegraphics[width=0.8\textwidth]{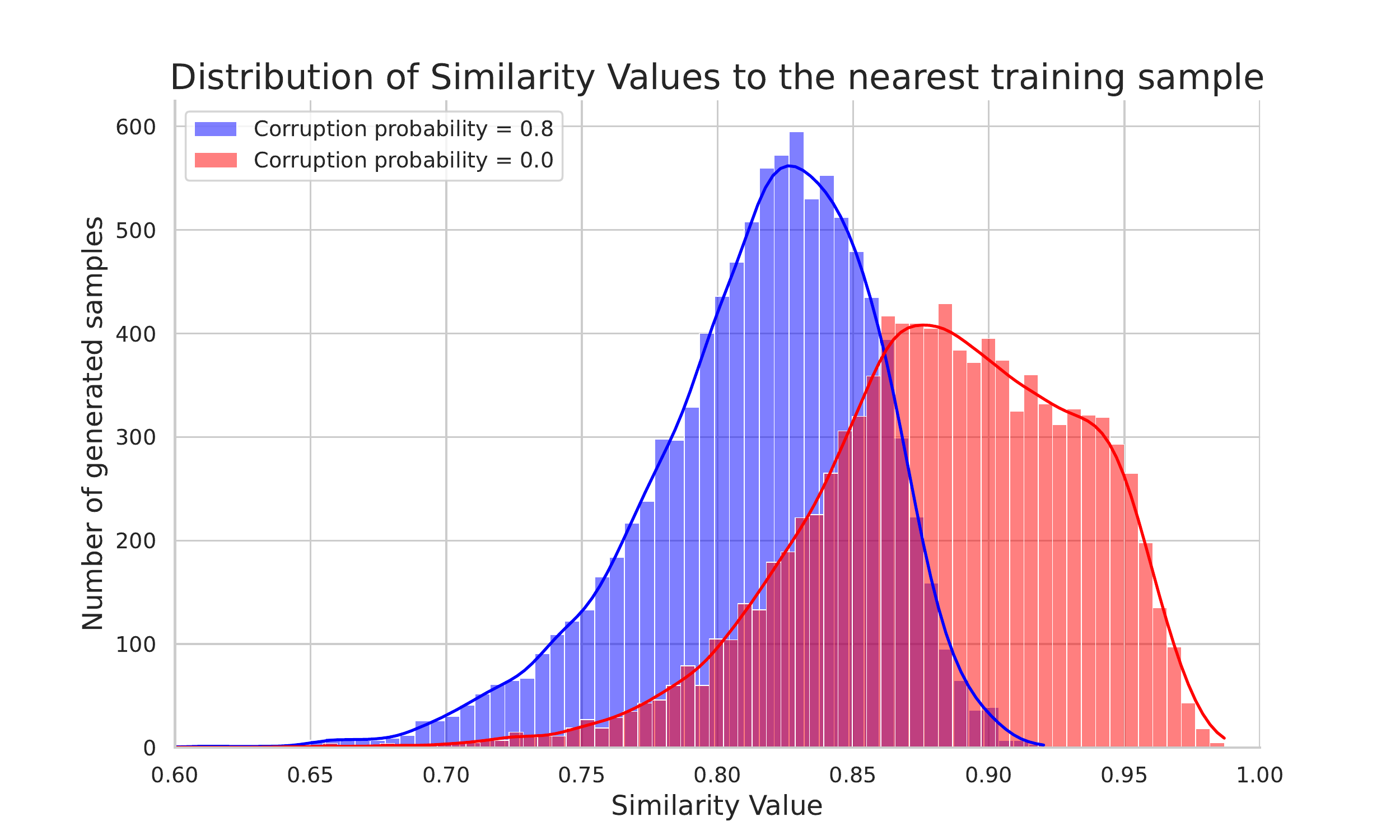}
    \caption{Distribution of similarity values to the nearest neighbor in the dataset for a finetuned IF model on a 3000 samples CelebA subset. Please note that similarity values above $0.95$ roughly correspond to the same person and similarities below $0.75$ typically correspond to random faces. Therefore the baseline finetuning process (red) often generates images that are near copies of the training set. On the contrary, our fine-tuning with corrupted samples (blue) shows a clear shift to the left. Visually we never observed a near-identical image generated from our process, see also Figure \ref{fig1} for qualitative results. }
    \label{fig:dino}
\end{figure}

To quantify the memorization, we follow the methodology of \citet{somepalli2022diffusion}. Specifically, we generate 10000 images from each model and we use DINO~\citep{dino}-v2~\citep{oquab2023dinov2} to compute top-$1$ similarity to the training images. Results are shown in Figure \ref{fig:dino}. Similarity values above $0.95$ roughly correspond to the same person while similarities below $0.75$ typically correspond to random faces. The standard finetuning (Red) often generates images that are near-identical with the training set. Instead, fine-tuning with corrupted samples (blue) shows a clear shift to the left. Visually we never observed a near-copy generated from our process -- see also Figure \ref{fig1}.

We repeat this experiment for models trained on the full CelebA dataset and at different levels of corruption. We include the results in Figure \ref{fig:dino_more} of the Appendix. As shown, the more we increase the corruption level the more the distribution of similarities shifts to the left, indicating less memorization. However, this comes at the cost of decreased performance, as reported in Table \ref{table:fids_table}.

\textbf{New domains and different corruption.}
We show that we can also finetune a pre-trained foundation model on a \textit{new domain} given a limited-sized dataset in a few hours in a single GPU. Figure~\ref{fig-mri} shows generated samples from a finetuned model on a dataset containing $155$ examples of brain tumor MRI images~\citep{brain-tumor-dataset}.
As shown, the model learns the statistics of full brain tumor MRI images while only trained on brain-tumor images that have a random box obfuscating $25\%$ of the image. 
The training set was resized to $64\times 64$ but the generated images are at $256 \times 256$ by simply leveraging the power of the cascaded Deepfloyd IF.

\textbf{Limitations.}
Our work has several limitations. First, there is a tradeoff between generator quality and corruption levels. For higher corruption, it is less likely that our generator memorizes parts of training examples, but at a cost of degrading quality. Precisely characterizing this trade-off is an open research problem. Further, in this work, we only experimented with very simple approximation algorithms to estimate $\E[\vx_0|\vx_t]$ using our trained models. Additionally, we cannot make any strict privacy claim about the protection of any training sample without making assumptions about the data distribution.
We show in the Appendix that it is possible to recover $\E[\vx_0|\vx_t]$ exactly using our restoration oracle, but we do not have an algorithm to do so. Finally, our method cannot handle measurements that also have noise. Future work could potentially address this limitation by exploiting SURE regularization as in \citep{aali2023solving}.

\paragraph{Acknowledgements.}
The authors would like to thank Tom Goldstein for insightful discussions that benefited this work.
This research has been supported by NSF Grants CCF 1763702,
AF 1901292, CNS 2148141, Tripods CCF 1934932, NSF AI Institute for Foundations of Machine Learning (IFML) 2019844, the Texas Advanced Computing Center (TACC) and research gifts by Western Digital, WNCG IAP, UT Austin Machine Learning Lab (MLL), Cisco and the Archie Straiton Endowed Faculty Fellowship. Giannis Daras has been supported by the Onassis Fellowship (Scholarship ID: F ZS 012-1/2022-2023), the Bodossaki Fellowship and the Leventis Fellowship.

\printbibliography
\newpage
\appendix

\section{Proofs}

\begin{proof}[Proof of Theorem~\ref{thm:minimizer}]
    Let $\vh_{\theta^*}$ be a minimizer of \eqref{eq:opt}, and for brevity let \[ \vf(\tilde{A} \vx_t, \tilde{A}) = \vh_{\theta^*}( \tilde{A}\vx_t, \tilde{A} ) - \mathbb{E}[ \vx_0 \mid \tilde{A}\vx_t, \tilde{A}] \] be the difference between $\vh_{\theta^*}$ and the claimed optimal solution. We will now argue that $\vf$ must be identically zero.
    
    First, by adding and subtracting $A\mathbb{E}[ \vx_0 \mid \tilde{A}\vx_t, \tilde{A}]$, we can expand the objective value achieved at $\theta = \theta^*$ in \eqref{eq:opt} as follows: \begin{align*}
        J^{\mathrm{corr}}(\theta^*) &= \E_{\vx_0, \vx_t, A, \tilde{A}} \sbr{ \norm{ (A\vx_0 - A\mathbb{E}[ \vx_0 \mid \tilde{A}\vx_t, \tilde{A}]) - A\vf(\tilde{A} \vx_t, \tilde{A})) }^2 } \\
        &= \E_{\vx_0, \vx_t, A, \tilde{A}} \sbr{ \norm{ A\vx_0 - A\mathbb{E}[ \vx_0 \mid \tilde{A}\vx_t, \tilde{A}]}^2 } + \E_{\vx_0, \vx_t, A, \tilde{A}} \sbr{ \norm{ A\vf(\tilde{A} \vx_t, \tilde{A}) }^2 } \\ &\quad - 2 \E_{\vx_0, \vx_t, A, \tilde{A}} \sbr{ (A\vx_0 - A\mathbb{E}[ \vx_0 \mid \tilde{A}\vx_t, \tilde{A}])^T A\vf(\tilde{A} \vx_t, \tilde{A}) }.
    \end{align*} Here the first term is the irreducible error, while the third term vanishes by the tower law of expectations: \begin{align*}
        &\E_{\vx_0, \vx_t, A, \tilde{A}} \sbr{ (A\vx_0 - A\mathbb{E}[ \vx_0 \mid \tilde{A}\vx_t, \tilde{A}])^T A\vf(\tilde{A} \vx_t, \tilde{A}) } \\
        & = \E_{\vx_0, A, \tilde{A}, \tilde{A}\vx_t} \sbr{ (\vx_0 - \mathbb{E}[ \vx_0 \mid \tilde{A}\vx_t, \tilde{A}])^T A^T A\vf(\tilde{A} \vx_t, \tilde{A}) } \\
        & = \E_{A, \tilde{A}, \tilde{A}\vx_t} \sbr{ \E_{\vx_0|A, \tilde{A}, \tilde{A}\vx_t} \sbr{ \vx_0 - \mathbb{E}[ \vx_0 \mid \tilde{A}\vx_t, \tilde{A}]}^T A^TA\vf(\tilde{A} \vx_t, \tilde{A}) } \\
        & = \E_{A, \tilde{A}, \tilde{A}\vx_t} \sbr{ ( \mathbb{E}[ \vx_0 \mid \tilde{A}\vx_t, \tilde{A}] - \mathbb{E}[ \vx_0 \mid \tilde{A}\vx_t, \tilde{A}])^T A^TA\vf(\tilde{A} \vx_t, \tilde{A}) } \\
        & = 0.
    \end{align*}

    Thus the only part of $J^{\mathrm{corr}}(\theta^*)$ that actually depends on the parameter value $\theta^*$ is the second term. We now show that the second term can be made to vanish and that this occurs precisely when $\vf$ is identically 0: \begin{align*}
        &\E_{\vx_0, \vx_t, A, \tilde{A}} \sbr{ \norm{ A\vf(\tilde{A} \vx_t, \tilde{A}) }^2 } \\
        &= \E_{\vx_0, \vx_t, A, \tilde{A}} \sbr{ \vf(\tilde{A} \vx_t, \tilde{A})^T A^T A \vf(\tilde{A} \vx_t, \tilde{A}) } \\
        &= \E_{\vx_0, \vx_t, \tilde{A}} \sbr{ \vf(\tilde{A} \vx_t, \tilde{A})^T \E_{A | \vx_0, \vx_t, \tilde{A}} \sbr{A^T A} \vf(\tilde{A} \vx_t, \tilde{A}) } \\
        &= \E_{\vx_0, \vx_t, \tilde{A}} \sbr{ \vf(\tilde{A} \vx_t, \tilde{A})^T \E_{A | \tilde{A}} \sbr{A^T A} \vf(\tilde{A} \vx_t, \tilde{A}) }.
    \end{align*} For every $\tilde{A}$ and $\vx_0, \vx_t$, by assumption we have that $\E_{A | \tilde{A}} \sbr{A^T A}$ is full-rank, and so the inner quadratic form is minimized when $\vf(\tilde{A} \vx_t, \tilde{A}) = 0$. Further, the term as a whole vanishes exactly when this holds for every $\tilde{A}$ and $\vx_0, \vx_t$ in the support, which means $\vf$ must be identically zero.
\end{proof}

\begin{corollary}[Inpainting noise model]\label{cor:inpainting}
    Consider the following inpainting noise model: $A \in \R^{n \times n}$ is a diagonal matrix where each entry $A_{ii} \sim \ber(1-p)$ for some $p > 0$ (independently for each $i$), and the additional noise is generated by drawing $\tilde{A}|A$ such that $\tilde{A}_{ii} = A_{ii} \ber(1-\delta)$ for some small $\delta > 0$ (again independently for each $i$). Then the unique minimizer of the objective in \eqref{eq:opt} is \begin{align*}
        \vh_{\theta^*}( \tilde{A}\vx_t, \tilde{A} ) = \mathbb{E}[ \vx_0 \mid \tilde{A}\vx_t, \tilde{A} ].
    \end{align*}
\end{corollary}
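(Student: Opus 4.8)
The plan is to deduce this corollary directly from Theorem~\ref{thm:minimizer}: essentially nothing new needs to be proved, only that the stated inpainting noise model satisfies the hypothesis of that theorem, namely that $\E_{A\mid\tilde A}[A^TA]$ is full-rank for every $\tilde A$ in the support of the induced marginal on further-corrupted matrices. So the whole proof is a verification of that one condition, followed by a one-line invocation of Theorem~\ref{thm:minimizer}.

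First I would record the structural simplification specific to inpainting. Since $A$ is a diagonal matrix with entries in $\{0,1\}$, we have $A^TA = A^2 = A$, so $\E_{A\mid\tilde A}[A^TA] = \E_{A\mid\tilde A}[A]$, which is again diagonal; its $i$-th diagonal entry is $q_i := \Pr[A_{ii}=1 \mid \tilde A]$. Because the pairs $(A_{ii},\tilde A_{ii})$ are mutually independent across $i$ (both $A$ and the extra masking Bernoullis are drawn coordinatewise independently), conditioning on the whole matrix $\tilde A$ collapses to conditioning on the single matching coordinate: $q_i = \Pr[A_{ii}=1 \mid \tilde A_{ii}]$.

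Next I would evaluate $q_i$ in the two possible cases. If $\tilde A_{ii}=1$, then since $\tilde A_{ii} = A_{ii}\cdot\ber(1-\delta)$ it must be that $A_{ii}=1$, so $q_i = 1$. If $\tilde A_{ii}=0$, then Bayes' rule gives $q_i = \frac{(1-p)\delta}{\,p + (1-p)\delta\,}$, where the numerator is the probability that $A_{ii}=1$ but the additional mask zeroed it out and the denominator is $\Pr[\tilde A_{ii}=0]$; this is strictly positive as long as $0 < p < 1$ and $\delta > 0$. Hence $\E_{A\mid\tilde A}[A^TA]$ is a diagonal matrix whose diagonal entries all lie in $(0,1]$, so it is invertible, i.e.\ full-rank, for every $\tilde A$ in the support. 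Applying Theorem~\ref{thm:minimizer} then yields $\vh_{\theta^*}(\tilde A\vx_t,\tilde A) = \E[\vx_0 \mid \tilde A\vx_t,\tilde A]$, which is exactly the claim.

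There is no genuine obstacle — the corollary is a routine specialization. The only points deserving a moment's care are: (i) justifying that conditioning on the full matrix $\tilde A$ factorizes into independent per-coordinate conditioning, which follows from the product structure of the joint law of $(A,\tilde A)$; and (ii) noting that the positivity of $q_i$ in the case $\tilde A_{ii}=0$ degrades as $\delta\to 0$ but never reaches $0$, so the hypothesis of Theorem~\ref{thm:minimizer} holds for every $\delta>0$, which is precisely the regime of interest (keeping $\tilde A$ close to $A$). One should also implicitly take $p<1$ so that $A$ is not almost surely the zero matrix, in which case the conditional-expectation statement is vacuous anyway.
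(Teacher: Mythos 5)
Your proposal is correct and follows essentially the same route as the paper's proof: reduce $\E_{A\mid\tilde A}[A^TA]$ to the diagonal matrix $\E_{A\mid\tilde A}[A]$, compute the per-coordinate conditional probabilities ($1$ when $\tilde A_{ii}=1$ and $\frac{(1-p)\delta}{(1-p)\delta+p}>0$ when $\tilde A_{ii}=0$ via Bayes' rule), conclude full rank, and invoke Theorem~\ref{thm:minimizer}. Your added remarks on the per-coordinate factorization and the implicit requirement $p<1$ are minor but valid refinements of the paper's argument.
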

\begin{proof}%
    By Theorem~\ref{thm:minimizer}, what we must show is that for any $\tilde{A}$ in the support, $E_{A|\tilde{A}}[A^T A]$ is full-rank. Fix any particular realization of $\tilde{A}$, which will be a diagonal matrix with only $0$s and $1$s. For indices $i$ where $\tilde{A}_{ii} = 1$,  we know that for any $A$ drawn conditional on $\tilde{A}$, $A_{ii} = 1$ as well, i.e.\ \[ \Pr( A_{ii} = 1 \mid \tilde{A}_{ii} = 1 ) = 1. \] And for indices $i$ where $\tilde{A}_{ii} = 0$, by Bayes' rule we have \[ \Pr( A_{ii} = 1 \mid \tilde{A}_{ii} = 0 ) = \frac{\Pr( A_{ii} = 1 , \tilde{A}_{ii} = 0 )}{ \Pr( A_{ii} = 0 , \tilde{A}_{ii} = 0 ) + \Pr( A_{ii} = 1 , \tilde{A}_{ii} = 0 ) } \\ = \frac{(1-p) \delta }{ (1-p) \delta + p } =: q. \] Thus we see that $E_{A|\tilde{A}}[A^T A] = E_{A|\tilde{A}}[A]$ is a diagonal matrix whose entries are $1$ wherever $\tilde{A}_{ii} = 1$ and $q$ wherever $\tilde{A}_{ii} = 0$. This is clearly of full rank since $q > 0$.
\end{proof}

\begin{corollary}[Gaussian measurements noise model]\label{cor:gaussian}
    Consider the following noise model where we only observe $m$ independent Gaussian measurements of the ground truth, and then one of the measurements is further omitted: $A \in \R^{m \times n}$ consists of $m$ rows drawn independently from $\gN(0, I_n)$, and $\tilde{A} \in \R^{m \times n}$ is constructed conditional on $A$ by zeroing out its last row. Then the unique minimizer of the objective in \eqref{eq:opt} is \begin{align*}
        \vh_{\theta^*}( \tilde{A}\vx_t, \tilde{A} ) = \mathbb{E}[ \vx_0 \mid \tilde{A}\vx_t, \tilde{A} ].
    \end{align*}
\end{corollary}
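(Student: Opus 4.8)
The plan is to reduce everything to Theorem~\ref{thm:minimizer}: it suffices to verify that for every realization of $\tilde{A}$ in the support of the construction, the matrix $\E_{A|\tilde{A}}[A^T A] \in \R^{n\times n}$ is full-rank. Once this is established, Theorem~\ref{thm:minimizer} immediately yields that the unique minimizer of \eqref{eq:opt} is $\E[\vx_0 \mid \tilde{A}\vx_t, \tilde{A}]$, which is exactly the claim.

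First I would identify the conditional law of $A$ given $\tilde{A}$. Writing $\va_1,\dots,\va_m$ for the rows of $A$, which are i.i.d.\ $\gN(0,I_n)$, and recalling that $\tilde{A}$ is obtained by zeroing the last row, conditioning on $\tilde{A}$ fixes $\va_1,\dots,\va_{m-1}$ to be exactly the first $m-1$ rows of $\tilde{A}$, while $\va_m$ still has distribution $\gN(0,I_n)$ — the key point being that independence of the rows means conditioning on the first $m-1$ of them leaves the law of $\va_m$ unchanged.

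Next I would expand $A^T A = \sum_{i=1}^m \va_i\va_i^T$. The partial sum over $i=1,\dots,m-1$ equals $\tilde{A}^T\tilde{A}$, since $\tilde{A}$ shares those rows and has a zero last row, and this is deterministic given $\tilde{A}$; taking the expectation over the only remaining randomness $\va_m$ gives
\[
\E_{A|\tilde{A}}[A^T A] \;=\; \tilde{A}^T\tilde{A} + \E[\va_m\va_m^T] \;=\; \tilde{A}^T\tilde{A} + I_n .
\]
Since $\tilde{A}^T\tilde{A}$ is positive semidefinite and $I_n$ is positive definite, their sum is positive definite and hence full-rank, which discharges the hypothesis of Theorem~\ref{thm:minimizer}.

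I do not anticipate any real obstacle: the argument is essentially a one-line computation, modulo the elementary observation that $\va_m \mid \tilde{A} \sim \gN(0,I_n)$, which is the only point deserving a moment's care. (If one instead dropped a uniformly random row rather than the last row, one would additionally condition on the index of the dropped row, after which the same identity $\E_{A|\tilde{A}}[A^TA] = \tilde{A}^T\tilde{A} + I_n$ holds and the conclusion is unchanged.)
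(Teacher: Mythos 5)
Your proposal is correct and matches the paper's proof essentially verbatim: both reduce to checking the full-rank condition of Theorem~\ref{thm:minimizer}, observe that conditional on $\tilde{A}$ the last row of $A$ is an independent $\gN(0,I_n)$ vector, and compute $\E_{A|\tilde{A}}[A^TA]=\tilde{A}^T\tilde{A}+I_n$, which is positive definite. No gaps.
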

\begin{proof}%
    Again by Theorem~\ref{thm:minimizer}, we must show that for any $\tilde{A}$ in the support, $E_{A|\tilde{A}}[A^T A]$ is full-rank. Fix any realization of $\tilde{A}$, which will have the following form: \begin{align*} \tilde{A} &= \begin{bmatrix} \horzbar & \va_1^T & \horzbar \\ & \vdots & \\ \horzbar & \va_{m-1}^T & \horzbar \\ \horzbar & \bm{0}^T & \horzbar \end{bmatrix}. \intertext{Then it is clear that conditional on $\tilde{A}$, $A$ has the following distribution:} A \mid \tilde{A} &= \begin{bmatrix} \horzbar & \va_1^T & \horzbar \\ & \vdots & \\ \horzbar & \va_{m-1}^T & \horzbar \\ \horzbar & \vb_m^T & \horzbar \end{bmatrix} \quad \text{where} \quad \vb_m \sim \gN(0, I_n). \end{align*} Here $\vb_m$ is drawn entirely independently from $\gN(0, I_n)$. Elementary manipulations now reveal that $\E_{A | \tilde{A}}[A^T A] = \E_{\vb_m \sim \gN(0, I_n)} [\tilde{A}^T \tilde{A} + \vb_m \vb_m^T] = \tilde{A}^T \tilde{A} + I_n$, which is clearly full rank (indeed, it is PSD with strictly positive eigenvalues).
\end{proof}

\subsection{Reduction}
\label{sec:reduction}
In this section we argue that if there is an algorithm that recovers the target distribution $p_0(\vx_0)$ from i.i.d. samples $(A\vx_0, A)$ where $A \sim p(A)$ and $\vx_0 \sim p_0(\vx_0)$, then, there is an algorithm that recovers $p_0(\vx_0)$ without sample access, but instead, using access to an oracle that given $t, \vx$ and $A$ in the support of $p(A)$, returns $\E[\vx_0 \mid A\vx_t]$. 

Indeed, for any $A$, \citet{chen2022sampling} show that it is possible to recover the distribution of $A\vx_0$ given access to $\E[A\vx_0 \mid A\vx_t]$ for any $t$ and $\vx$ under some minimal assumptions on the data distribution $p_0(\vx_0)$, see \cite[Assumptions 1-3]{chen2022sampling}. 
Using our oracle and using this theorem, we can recover the distribution of $A\vx_0$ for all $A$ in the support. By sampling from these distributions, one can as well obtain samples of $A\vx_0$ for $A \sim p(A)$ and $\vx_0 \sim p_0(\vx_0)$. Hence, if these samples are sufficient to recover $p_0(\vx_0)$, then, having an oracle to these conditional expectations is sufficient as well.

This intuition can be formalized as follows. Fix a distribution $p_A(A)$ over corruption matrices. For a distribution $p_0(\vx_0)$, denote by $\crp(p_A, p_0)$ the distribution over pairs $(A,A\vx_0)$ where $A\sim p_A(A)$ and $\vx_0 \sim p_0(\vx_0)$. 
We say that \emph{it is possible to reconstruct $p_0(\vx_0)$ from random corruptions $A \sim p_A(A)$} if the following holds: for any two distributions, $p_0(\vx_0)$ and $p_0'(\vx_0')$ that satisfy Assumptions 1-3 of \citet{chen2022sampling}, if $\crp(p_A,p_0) = \crp(p_A,p_0')$, then $p_0 = p_0'$.
Similarly, we say that \emph{it is possible to reconstruct $p_0(x_0)$ from conditional expectations given $A \sim p(A)$} if the following holds: for any distribution $p_0(\vx_0)$ and $p_0'(\vx_0')$ that satisfy Assumptions 1-3 of \citet{chen2022sampling}, if for all $x$, $t$ and $A$ in the support of $p_A$, 

\begin{equation}\label{eq:rec-cond}
\E_{(\vx_0,\vx_t) \sim p_{0,t}(\vx_0,\vx_t)}[\vx_0 \mid A\vx_t = \vx]
= \E_{(\vx'_0,\vx'_t) \sim p'_{0,t}(\vx_0',\vx_t')}[\vx_0' \mid A\vx_t' = x]
\end{equation}

then $p_0 = p_0'$. Here, $p_{0,t}(\vx_0,\vx_t)$ is obtained by sampling $\vx_0 \sim p_0$ and $\vx_t = \vx_0 + \sigma_t \veta$ where $\veta \sim \mathcal{N}(0,I)$. Similarly, $p'_{0,t}(\vx_0',\vx_t')$ is obtained by the same process where $\vx_0'$ is instead sampled from $p_0'$.
We state the following lemma:
\begin{lemma}
    Fix a distribution $p_A(A)$. If it is possible to reconstruct $p_0(\vx_0)$ from random corruptions $\vy_0 = A\vx_0$, $A \sim p_A(A)$, then it is possible to reconstruct $p_0(\vx_0)$ given access to an oracle that computes the conditional expectations $\E[\vx_0 | A\vx_t, A]$, for $A \sim p_A(A)$ and $\vx_t = \vx_0 + \sigma_t \veta$.
    \label{lemma:reduction}
\end{lemma}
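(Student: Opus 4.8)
The plan is to chain together two reductions. First, I would use the cited result of \citet{chen2022sampling} at the level of a \emph{single} fixed corruption matrix $A$ in the support of $p_A$: since $A\vx_0$ is itself a random variable whose associated diffusion process has score function expressible via $\E[A\vx_0 \mid A\vx_t]$, and since this conditional expectation can be obtained from our oracle (note that $\E[A\vx_0 \mid A\vx_t = \vx, A] = A\,\E[\vx_0 \mid A\vx_t = \vx, A]$ by linearity), the theorem of \citet{chen2022sampling} lets us sample from the pushforward distribution of $A\vx_0$ — \emph{provided} that this pushforward satisfies their Assumptions 1--3. So the first substantive step is to verify (or assume, matching the hypothesis stand of the lemma) that whenever $p_0$ satisfies Assumptions 1--3, so does the law of $A\vx_0$ for each $A$ in the support; this is where a little care is needed, and I would either impose it or argue it follows from the stated assumptions on $p_0$.

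Second, once we can sample from the law of $A\vx_0$ for every fixed $A$, I would draw $A \sim p_A$ and then draw a fresh sample from the law of $A\vx_0$ conditioned on that $A$; the resulting pair $(A, A\vx_0)$ has exactly the distribution $\crp(p_A, p_0)$. Thus oracle access to $\{\E[\vx_0 \mid A\vx_t, A]\}_{A,t,\vx}$ gives us i.i.d.\ samples from $\crp(p_A, p_0)$, and by the hypothesis that $p_0$ is reconstructible from such corrupted samples, $p_0$ is determined.

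To make this match the precise ``it is possible to reconstruct'' definitions given just above the lemma, I would phrase the argument contrapositively at the level of identifiability rather than algorithmically: suppose $p_0$ and $p_0'$ both satisfy Assumptions 1--3 and satisfy the conditional-expectation-matching condition \eqref{eq:rec-cond} for all $\vx, t$, and all $A$ in the support of $p_A$. Fix such an $A$. Multiplying \eqref{eq:rec-cond} on the left by $A$ shows $\E[A\vx_0 \mid A\vx_t = \vx] = \E[A\vx_0' \mid A\vx_t' = \vx]$ for all $\vx, t$. Applying \citet[Theorem / Assumptions 1--3]{chen2022sampling} — which says these conditional expectations determine the clean distribution — to the pushforward processes, we conclude that the law of $A\vx_0$ equals the law of $A\vx_0'$. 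Since this holds for every $A$ in the support of $p_A$, the joint laws agree: $\crp(p_A, p_0) = \crp(p_A, p_0')$. The hypothesis that $p_0$ is reconstructible from random corruptions $A \sim p_A$ then yields $p_0 = p_0'$, which is exactly the conclusion that $p_0$ is reconstructible from the conditional expectations.

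The main obstacle is the regularity bookkeeping in the first step: \citet{chen2022sampling}'s guarantee is stated for distributions obeying their Assumptions 1--3 (e.g.\ bounded support or moment/Lipschitz-score conditions), and I must ensure the linear pushforward $A\vx_0$ still obeys them, or else restrict attention to the class of $p_0$ for which it does. For the clean theory statement this is minor — the lemma is explicitly an in-principle identifiability result, and as the paper notes in the Limitations, no efficient algorithm is claimed — so I would handle it by either (i) inheriting the assumptions through the linear map, observing that bounded support and finite moments are preserved under $\vx_0 \mapsto A\vx_0$, and that the score condition can be taken to hold for the lower-dimensional marginal, or (ii) simply stating the lemma's hypothesis as already quantifying over the relevant regularity class, as the surrounding definitions in Section~\ref{sec:reduction} do.
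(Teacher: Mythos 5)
Your argument is correct and follows essentially the same route as the paper's proof: fix $A$ in the support, observe that matching conditional expectations $\E[\vx_0\mid A\vx_t, A]$ imply matching $\E[A\vx_0\mid A\vx_t]$, invoke \citet{chen2022sampling} to conclude the laws of $A\vx_0$ and $A\vx_0'$ agree, deduce $\crp(p_A,p_0)=\crp(p_A,p_0')$, and apply the reconstructibility-from-corruptions hypothesis. Your added care about whether the pushforward law of $A\vx_0$ inherits Assumptions 1--3 of \citet{chen2022sampling} is a legitimate point that the paper's proof passes over silently, but it does not change the substance of the argument.
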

\begin{proof}
    Assume that it is possible to reconstruct $p_0(\vx_0)$ from random corruptions $A \sim p_A(A)$ and we will prove that it is possible to reconstruct $p_0(\vx_0)$ from conditional expectations given $A \sim p_A(A)$. To do so, let $p_0(\vx_0)$ and $p_0'(\vx_0')$ be two distributions that satisfy Assumptions 1-3 of \citet{chen2022sampling}. Assume that \eqref{eq:rec-cond} holds. We will prove that $p_0 = p_0'$. Fix some $A$ in the support of $p_A$. By \citet{chen2022sampling}, there is an algorithm that samples from the distribution of $A\vx_0$, $\vx_0 \sim p_0(\vx_0)$, that only has access to $\E[A\vx_0 \mid A\vx_t, A]$ and similarly, there is an algorithm that samples from the distribution of $A\vx_0'$ that only has access to $\E[A\vx_0' \mid A\vx_t', A]$. Since these two conditional expectations are assumed to be the same, then the distribution of $A\vx_0$ equals the distribution of $A\vx_0'$. Consequently, $\crp(p_A,p_0) = \crp(p_A,p_0')$. By the assumption that it is possible to reconstruct $p_0(\vx_0)$ from random corruptions $A \sim p_A(A)$, this implies that $p_0 = p_0'$. This completes the proof.
\end{proof}

\section{Broader Impact and Risks}

Generative models in general hold the potential to have far-reaching impacts on society in a variety of forms, coupled with several associated risks \cite{menon2020pulse,karras2019style,karras2020analyzing,karras2021alias}. Among other potential applications, they can be utilized to create deceptive images and perpetuate societal biases. To the best of our knowledge, our paper does not amplify any of these existing risks. Regarding the included MRI results, we want to clarify that we make no claim that such results are diagnostically useful. This experiment serves only as a toy demonstration that it can be potentially feasible to learn the distribution of MRI scans with corrupted samples. Significant further research must be done in collaboration with radiologists before our algorithm gets tested in clinical trials. Finally, we want to underline again that even though our approach seems to mitigate the memorization issue in generative models, we cannot guarantee the privacy of any training sample unless we make assumptions about the data distribution. Hence, we strongly discourage using this algorithm in applications where privacy is important before this research topic is investigated further.

\section{Training Details}
\label{sec:training_details}
We open-source our code and models to facilitate further research in this area: \href{https://github.com/giannisdaras/ambient-diffusion}{https://github.com/giannisdaras/ambient-diffusion}.

\paragraph{Models trained from scratch.}

We trained models from scratch at different corruption levels on CelebA-HQ, AFHQ and CIFAR-10. The resolution of the first two datasets was set to $64\times64$ and for CIFAR-10 we trained on $32\times 32$. 

We started from EDM's~\citep{karras2022elucidating} official implementation and made some necessary changes. 
Architecturally, the only change we made was to replace the convolutional layers with Gated Convolutions~\citep{gated_conv} that are known to perform well for inpainting problems. We observed that this change stabilized the training significantly, especially in the high-corruptions regime. As in EDM, we use the architecture from the DDPM++~\citep{ddpm++} paper.

To avoid additional design complexity, we tried to keep our hyperparameters as close as possible to the EDM paper. We observed
that for high corruption levels, it was useful to add gradient clipping, otherwise, the training would often diverge. For all our experiments, we use gradient clipping with max-norm set to $1.0$. We underline that unfortunately, even with gradient clipping, the training at high corruption levels ($p\geq 0.8$), still diverges sometimes. Whenever this happened, we restarted the training from an earlier checkpoint. We list the rest of the hyperparameters we used in Table \ref{tab:hyperparams}. 

\begin{table}[h]
\centering
\caption{Training Hyperparameters}
\label{tab:hyperparams}
\scriptsize{
\begin{tabular}{l|l|l|l|l|l|l|l|l|l}
\toprule
\textbf{Dataset} & \textbf{Iters} & \textbf{Batch} & \textbf{LR} & \textbf{SDE} & $\bm{p}$ & $\bm{\delta}$ & \textbf{Aug. Prob.} & \textbf{Ch. Multipliers} & \textbf{Dropout} \\
\midrule
CIFAR-10 & \multirow{4}{*}{$200000$} & 512 & 1e-3 & \multirow{4}{*}{VP} & $\{0.2, 0.4, 0.6, 0.8\}$ & \multirow{3}{*}{$0.1$} & $0.12$ & (1, 1, 1, 1) & $0.13$ \\ \cline{1-1} \cline{3-4} \cline{6-10} 
AFHQ &  & \multirow{3}{*}{256} & \multirow{3}{*}{2e-4} & & $\{0.2, 0.4, 0.6, 0.8, 0.9\}$ & & \multirow{3}{*}{0.15} & \multirow{3}{*}{(1, 2, 2, 2)} & 0.25\\ \cline{1-1} 
\cline{6-7} \cline{10-10}
\multirow{2}{*}{CelebA-HQ } & & & & & $\{0.2, 0.6, 0.8\}$ & & & & \multirow{2}{*}{$0.1$} \\
& & & & & $0.9$ & $0.3$ & & & \\
\bottomrule
\end{tabular}}
\end{table}

Training diffusion models from scratch is quite computationally intensive. We trained all our models for $200000$ iterations. Our CIFAR-10 models required $\approx 2$ days of training each on $6$ A100 GPUs. Our AFHQ and CelebA-HQ models required $\approx 6$ days of training each on $6$ A100 GPUs. These numbers roughly match the performance reported in the EDM paper, indicating that the extra corruption we need to do on matrix $A$ does not increase training time.

Due to the increased computational complexity of training these models, we could not extensively optimize the hyperparameters, e.g. the $\delta$ probability in our extra corruption. For higher corruption, e.g. for $p=0.9$, we noticed that we had to increase $\delta$ in order for the model to learn to perform well on the unobserved pixels.

\paragraph{Finetuning Deepfloyd's IF.} We access Deepfloyd's IF~\cite{if_repo} model through the \texttt{diffusers} library. The model is a Cascaded Diffusion Model~\citep{ho2022cascaded}. The first part of the pipeline is a text-conditional diffusion model that outputs images at resolution $64\times 64$. Next in the pipeline, there are two diffusion models that are conditioned both in the input text and the low-resolution output of the previous stage. The first upscaling module increases the resolution from $64\times64$ to $256\times 256$ and the final one from $256\times256$ to $1024\times1024$.

To reduce the computational requirements of the finetuning, we only finetune the first text-conditional diffusion model that works with $64\times 64$ resolution images. Once the finetuning is completed, we use again the whole model to generate high-resolution images. 

For the finetuning, we set the training batch size to $32$ and the learning rate to $3\mathrm{e}-6$. We train for a maximum of $15000$ steps and we keep the checkpoint that gives the lowest error on the pixels that we further corrupted. To further reduce the computational requirements, we use an 8-bit Adam Optimizer and we train with half-precision. 

For our CelebA finetuning experiments, we set $\delta=0.1$ and $p=0.8$. We experiment with the full training set, a subset of size $3000$ (see Figure \ref{fig1}) and a subset of $300$. For the model trained with only $300$ heavily corrupted samples, we did not observe memorization but the samples were of very low quality. Intuitively, our algorithm provides a way to control the trade-off between memorization and fidelity. Fully exploring this trade-off is a very promising direction for future work. For our MRI experiments, we use two non-overlapping blocks that each obfuscate $25\%$ of the image and we evaluate the model in one of them.

All of our fine-tuning experiments can be completed in a few hours. Training for $15000$ iterations takes $\approx 10$ hours on an A100 GPU, but we usually get the best checkpoints earlier in the training.

\section{Evaluation Details}

\paragraph{FID evaluation.}
Our FID~\citep{fid} score is computed with respect to the training set, as is standard practice, e.g. see \citep{ddpm, karras2022elucidating, ncsnv3}. For each of our models trained from scratch, we generate $50000$ images using the seeds $0-49999$. Once we generate our images, we use the code provided in the official implementation of the EDM~\citep{karras2022elucidating} paper for the FID computation.

\section{Additional Experiments}
\label{sec:additional_experiments}
\subsection{Restoration performance with noisy measurements}

In Table \ref{table:restoration} of the main paper, we compare the restoration performance of our models and vanilla diffusion models (trained with uncorrupted images). We compare the restoration performance in the task of random inpainting because it is straightforward to use our models to solve this inverse problem. It is potentially feasible to use our trained generative models to solve any (linear or non-linear) inverse problem but we leave this direction for future work.

In this section, we further compare our models (trained with randomly inpainted images) in the task of inpainting with noisy measurements. Concretely, we want to predict $\vx_0$, given measurements $\vy_t = A(\vx_0 + \sigma_{\vy_t}\veta), \ \veta \sim \mathcal N(0, I)$. As in the rest of the paper, we assume that the mask matrix $A$ is known. We can solve this problem with one model prediction using our trained models since according to Eq. \ref{eq:ahat-minimizer}, we are learning: $\vh_{\theta}(\vy_t, A, t) = \E[\vx_0| A\vx_t=\vy_t, A]$.

We use the EDM~\citep{karras2022elucidating} state-of-the-art model trained on AFHQ as our baseline (as we did in the main paper). To use this pre-trained model to solve the noisy random inpainting inverse problem, we need a reconstruction algorithm. We experiment again with DPS and DDRM which can both handle inverse problems with noise in the measurements. We present our results in Table \ref{table:restoration-measurement-noise}. As shown, our models significantly outperform the EDM models that use the DDRM reconstruction algorithm and perform on par with the EDM models that use the DPS reconstruction algorithm.

\begin{table}[!htp]
\centering
\small{
\begin{tabular}{@{}cccccccc@{}}
\toprule
\textbf{Dataset} & \textbf{Corruption Probability} & \textbf{Measurement Noise ($\sigma_{\vy_0}$)} & \textbf{Method} & \textbf{LPIPS} & \textbf{PSNR} & \textbf{NFE} \\ 
\midrule
AFHQ 
& \multirow{5}{*}{0.4} & \multirow{5}{*}{0.05} & Ours & 0.0861 & 29.46 & 1\\ 
&  & & DPS & \textbf{0.0846} & \textbf{29.83} & 100\\
&  & & \multirow{3}{*}{DDRM} & 0.2061 & 24.47 & 35 \\
&  &  & & 0.1739 & 25.38 & 99 \\
&  &  & & 0.1677 & 25.58 & 199 \\
\hline
& \multirow{5}{*}{0.6} & \multirow{5}{*}{0.05} & Ours & 0.1031 & 27.40 & 1\\ 
&  & & DPS & \textbf{0.0949} & \textbf{27.92} & 100\\ 
&  & & \multirow{3}{*}{DDRM} & 0.4066 & 18.73 & 35\\ 
&  &  & & 0.3626 & 19.49 & 99\\ 
&  &  & & 0.3506 & 19.70 & 199\\ 
\hline 
& \multirow{5}{*}{0.8} & \multirow{5}{*}{0.05} & Ours & 0.1792 & \textbf{23.21} & 1\\ 
&  & & DPS & \textbf{0.1778} & 23.01 & 100\\  
&  & & \multirow{3}{*}{DDRM} & 0.5879  & 13.65 & 35 \\
&  &  & & 0.5802 & 13.99  & 99\\
&  &  & & 0.5753 & 14.09 & 199\\
\end{tabular}}
\caption{\small Comparison of our model (trained on corrupted data) with state-of-the-art diffusion models on CelebA (DDIM~\citep{ddim} model) and AFHQ (EDM~\citep{karras2022elucidating} model) for solving the random inpainting inverse problem with \textbf{noise}.}
\label{table:restoration-measurement-noise}
\end{table}

\subsection{Comparison with Supervised Methods}
\label{sec:supervised_comp}
For completeness, we include a comparison with Masked AutoEncoders~\citep{mae}, a state-of-the-art supervised method for solving the random inpainting problem. The official repository of this paper does not include models trained on AFHQ. We compare with the available models that are trained on the iNaturalist dataset which is the most semantically close dataset we could find. We emphasize that this model was trained with access to uncorrupted images. Results are shown in \ref{table:comp_supervised_methods}. As shown, our method and DPS outperform this supervised baseline. We underline that this experiment is included for completeness and does not exclude the possibility that there are more performant supervised alternatives for random inpainting.

\begin{table}[htp]
\centering
\begin{tabular}{@{}ccccccc@{}}
\toprule
\textbf{Corruption Probability} & \textbf{Method} & \textbf{LPIPS} & \textbf{PSNR} & \textbf{NFE} \\ \midrule
\multirow{3}{*}{0.4} & Ours & 0.0304 & 33.27 & 1\\ 
& DPS & $\bm{0.0203}$ & $\bm{34.06}$ & 100\\
& MAE & $0.0752$ & $28.88$ & 1\\
\hline
\multirow{3}{*}{0.6} & Ours & 0.0628 & 29.46 & 1\\ 
& DPS & $\bm{0.0518}$ & $\bm{30.03}$ & 100\\ 
& MAE & $0.0995$ & $25.89$ & 1\\
\hline 
\multirow{3}{*}{0.8} & Ours & 0.1245 & $\bm{25.37}$ & 1\\ 
& DPS & $\bm{0.1078}$ & 25.30 & 100\\ 
& MAE & $0.1794$ & $22.01$ & 1 \\
\bottomrule
\end{tabular}
\vspace{1em}
\caption{Comparison with the MAE~\citep{mae}, a state-of-the-art supervised method for solving the restoration task of random inpainting.}
\label{table:comp_supervised_methods}
\end{table}

\subsection{Sampler Ablation}
\label{sec:sampling_ablation}
By Lemma \ref{lemma:reduction}, if it is possible to learn $p_0(\vx_0)$ using corrupted samples then it is also possible
to use our learned model to sample from $p_0(\vx_0)$. Even though such an algorithm exists, we do not know which one it is. 

In the paper, we proposed to simple ideas for sampling, the Fixed Mask Sampler and the Reconstruction Guidance Sampler. The Fixed Mask Sampler fixes a mask throughout the sampling process. Hence, sampling with this algorithm is equivalent to first sampling some pixels from the marginals of $p_0(\vx_0)$ and then completing the rest of the pixels with the best reconstruction (the conditional expectation) in the last step. This simple sampler performs remarkably well and we use it throughout the main paper.

To demonstrate that we can further improve the sampling performance, we proposed the Reconstruction Guidance sampler that takes into account all the pixels by forcing predictions of the model with different masks to be consistent. In Table \ref{tab:sampling_comparison} we ablate the performance of this alternative sampler. For the reconstruction guidance sampler, we select each time four masks at random and we add an extra update term to the Fixed Mask Sampler that ensures that the predictions of the Fixed Mask Sampler are not very different compared to the predictions with the other four masks (that have different context regarding the current iterate $\vx_t$). We set the guidance parameter $w_t$ to the value $5e-4$. As shown, this sampler improves the performance, especially for the low corruption probabilities where the extra masks give significant information about the current state to the predictions given only one fixed mask. However, the benefits of this sampler are vanishing for higher corruption. Given that the two samples perform on par and that the Reconstruction Guidance Sampler is much more computationally intensive (we need one extra prediction per step for each extra mask), we choose to use the Fixed Mask Sampler for all the experiments in the paper.

\begin{table}[!htp]
    \centering
    \begin{tabular}{@{}ccccc@{}}
    \toprule
    \textbf{Sampler Type} & \textbf{Corruption Probability} & \textbf{FID} & \textbf{Inception Score} 
    \\ \midrule
    \multirow{4}{*}{Fixed Mask} & 0.2 & 11.70 & 7.97 
    \\
    & 0.4 &  18.85 & 7.45 
    \\
    & 0.6 & $\bm{28.88}$ & 6.88 
    \\
    & 0.8 & $\bm{46.27}$ & $\bm{6.14}$  
    \\ \midrule[1pt]
    \multirow{4}{*}{Reconstruction Guidance} & 0.2 & $\bm{11.59}$ & $\bm{8.01}$
    \\
    & 0.4 & $\bm{18.52}$ & $\bm{7.51}$
    \\
    & 0.6 & $28.90$ & $\bm{6.91}$
    \\
    & 0.8 & $46.31$ & $6.13$ \\ \bottomrule
    \end{tabular}
    \vspace{1em}
    \caption{Comparison between the Fixed Mask sampler and the Reconstruction Guidance Sampler.}
    \label{tab:sampling_comparison}
\end{table}

\subsection{Additional Figures}

Figure \ref{fig:reconstructions} shows reconstructions of AFHQ corrupted images with the EDM AFHQ model trained on clean data (columns 3, 4) and our model trained on corrupted data (column 5). The restoration task is random inpainting at probability $p=0.8$. The last two rows also have measurement noise with $\sigma_{\vy_{0}}=0.05$.

\begin{figure}[!htp]
    \centering
    \begin{minipage}{0.18\textwidth}
        \caption*{Input}
        \includegraphics[width=\textwidth]{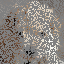}
    \end{minipage}
    \begin{minipage}{0.18\textwidth}
        \caption*{Ground Truth}
        \includegraphics[width=\textwidth]{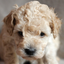}  
    \end{minipage}
    \begin{minipage}{0.18\textwidth}
        \caption*{DDRM~\citep{ddrm}}
        \includegraphics[width=\textwidth]{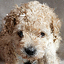}  
    \end{minipage}
    \begin{minipage}{0.18\textwidth}
        \caption*{DPS~\citep{dps}}
        \includegraphics[width=\textwidth]{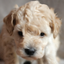}  
    \end{minipage}
    \begin{minipage}{0.18\textwidth}
        \caption*{Ours}
        \includegraphics[width=\textwidth]{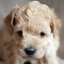}  
    \end{minipage} \\
    \begin{minipage}{0.18\textwidth}
        \includegraphics[width=\textwidth]{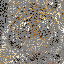}
    \end{minipage}
    \begin{minipage}{0.18\textwidth}
        \includegraphics[width=\textwidth]{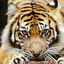}  
    \end{minipage}
    \begin{minipage}{0.18\textwidth}
        \includegraphics[width=\textwidth]{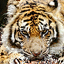}  
    \end{minipage}
    \begin{minipage}{0.18\textwidth}
        \includegraphics[width=\textwidth]{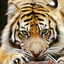}  
    \end{minipage}
    \begin{minipage}{0.18\textwidth}
        \includegraphics[width=\textwidth]{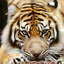}  
    \end{minipage} \\
    \begin{minipage}{0.18\textwidth}
        \includegraphics[width=\textwidth]{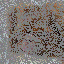}
    \end{minipage}
    \begin{minipage}{0.18\textwidth}
        \includegraphics[width=\textwidth]{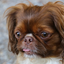}  
    \end{minipage}
    \begin{minipage}{0.18\textwidth}
        \includegraphics[width=\textwidth]{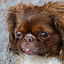}  
    \end{minipage}
    \begin{minipage}{0.18\textwidth}
        \includegraphics[width=\textwidth]{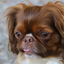}  
    \end{minipage}
    \begin{minipage}{0.18\textwidth}
        \includegraphics[width=\textwidth]{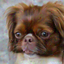}  
    \end{minipage} \\
    \begin{minipage}{0.18\textwidth}
        \includegraphics[width=\textwidth]{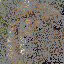}
    \end{minipage}
    \begin{minipage}{0.18\textwidth}
        \includegraphics[width=\textwidth]{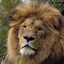}  
    \end{minipage}
    \begin{minipage}{0.18\textwidth}
        \includegraphics[width=\textwidth]{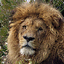}  
    \end{minipage}
    \begin{minipage}{0.18\textwidth}
        \includegraphics[width=\textwidth]{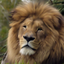}  
    \end{minipage}
    \begin{minipage}{0.18\textwidth}
        \includegraphics[width=\textwidth]{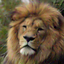}  
    \end{minipage}
    \caption{Reconstructions of AFHQ corrupted images with the EDM AFHQ model trained on clean data (columns 3, 4) and our model trained on corrupted data (column 5). The restoration task is random inpainting at probability $p=0.6$. The last two rows also have measurement noise with $\sigma_{\vy_{0}}=0.05$.}
    \label{fig:reconstructions}
\end{figure}

In Figure \ref{fig:dino_more}, we repeat the experiment of Figure \ref{fig:dino} of the main paper but for models trained on the full CelebA dataset and at different levels of corruption. As shown, increasing the corruption level leads to a clear shift of the distribution to the left, indicating less memorization. This comes at the cost of decreased performance, as reported in Table \ref{table:fids_table}.

\begin{figure}[!htp]
    \centering
    \includegraphics[width=0.8\textwidth]{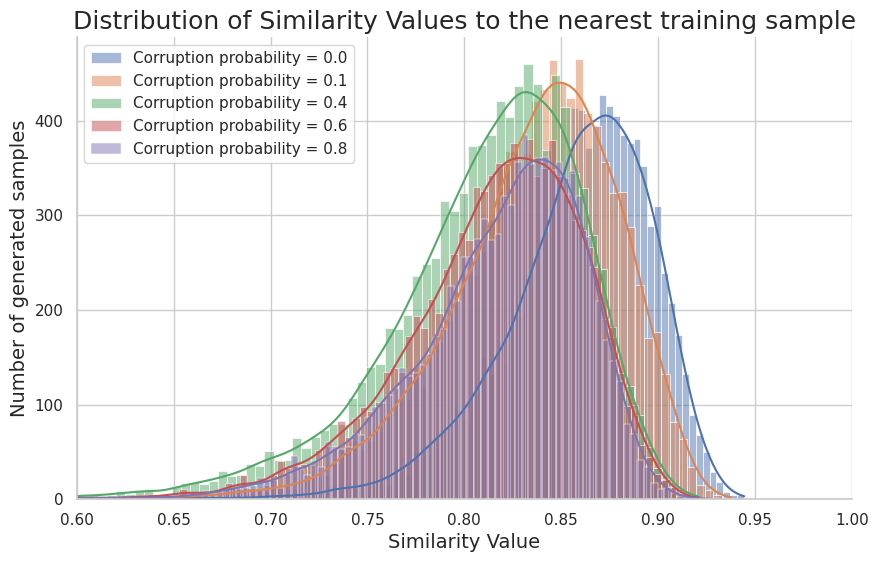}
    \caption{Distribution of similarity values to the nearest neighbor in the dataset for finetuned IF models on the full CelebA dataset at different levels of corruption. Please note that similarity values above $0.95$ roughly correspond to the same person, while similarities below $0.75$ correspond to almost random faces. As shown, increasing the corruption level leads to a clear shift of the distribution to the left, indicating less memorization.
    }
    \label{fig:dino_more}
\end{figure}

In the remaining pages, we include uncurated unconditional generations of our models trained at different corruption levels $p$. Results are shown in Figures \ref{fig:celeba_generations},\ref{fig:afhq_generations},\ref{fig:cifar_generations}.

\begin{figure}[!htp]
    \centering
    \begin{minipage}{0.4\textwidth}
        \caption*{CelebA dataset, $p=0.6, \delta=0.1$}
        \includegraphics[width=\textwidth]{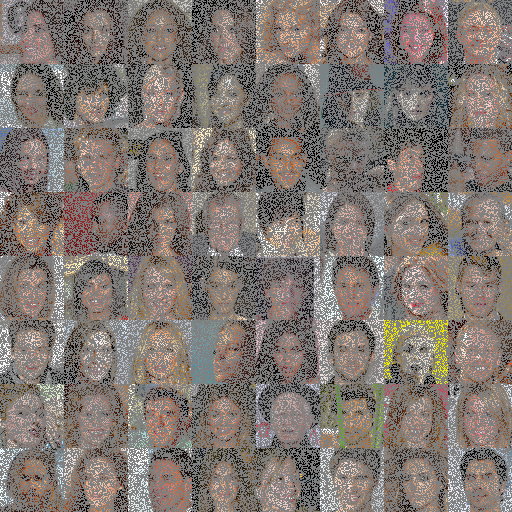}
    \end{minipage}
    \begin{minipage}{0.4\textwidth}
        \caption*{Uncurated samples from our model}
        \includegraphics[width=\textwidth]{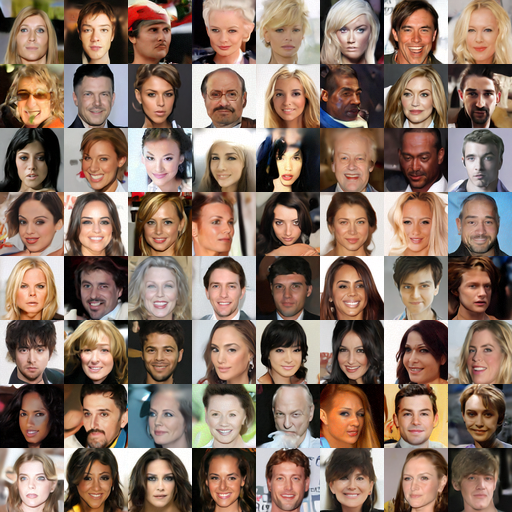}
    \end{minipage} \\ \vspace{1em}
    \begin{minipage}{0.4\textwidth}
        \caption*{CelebA dataset, $p=0.8, \delta=0.1$}
        \includegraphics[width=\textwidth]{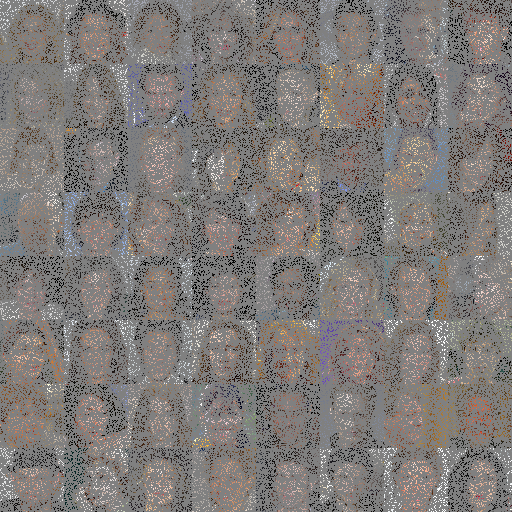}
    \end{minipage}
    \begin{minipage}{0.4\textwidth}
        \caption*{Uncurated samples from our model}
        \includegraphics[width=\textwidth]{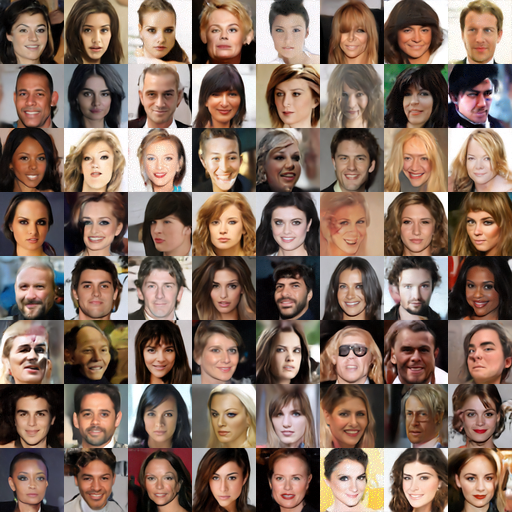}
    \end{minipage} \\ \vspace{1em}
    \begin{minipage}{0.4\textwidth}
        \caption*{CelebA dataset, $p=0.9, \delta=0.3$}
        \includegraphics[width=\textwidth]{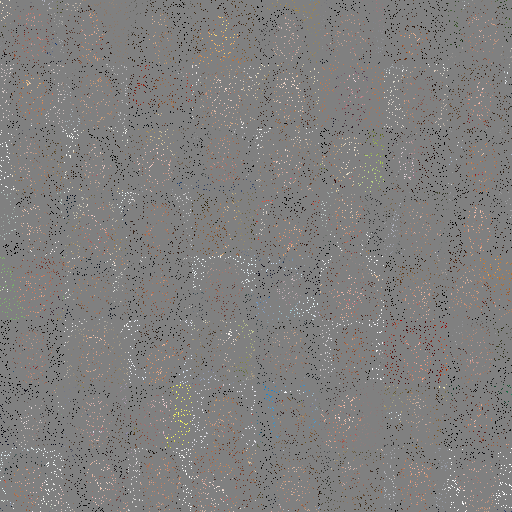}
    \end{minipage}
    \begin{minipage}{0.4\textwidth}
        \caption*{Uncurated samples from our model}
        \includegraphics[width=\textwidth]{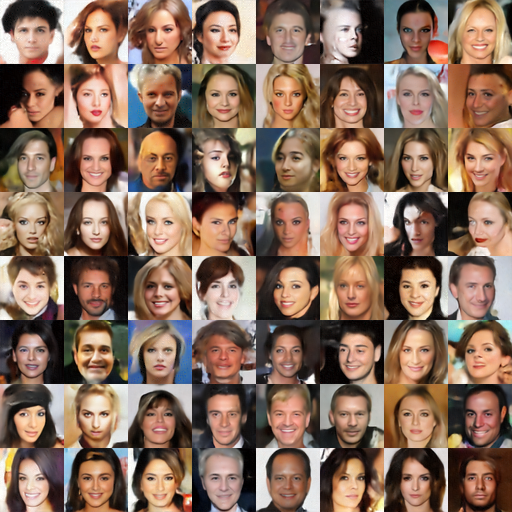}
    \end{minipage}
    \caption{Left column: CelebA-HQ training dataset with random inpainting at different levels of corruption $p$ (the survival probability is $(1-p)\cdot (1-\delta)$). Right column: Unconditional generations from our models trained with the corresponding parameters. As shown, the generations become slightly worse as we increase the level of corruption, but we can reasonably well learn the distribution even with $93\%$ pixels missing (on average) from each training image.}
    \label{fig:celeba_generations}
\end{figure}

\begin{figure}[!htp]
    \centering
    \begin{minipage}{0.4\textwidth}
        \caption*{AFHQ dataset, $p=0.4, \delta=0.1$}
        \includegraphics[width=\textwidth]{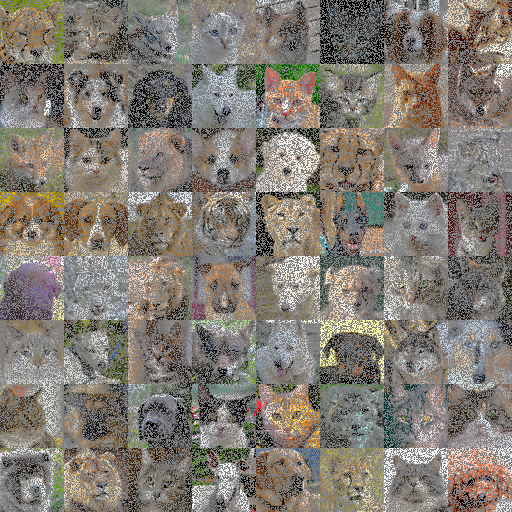}
    \end{minipage}
    \begin{minipage}{0.4\textwidth}
        \caption*{Uncurated samples from our model}
        \includegraphics[width=\textwidth]{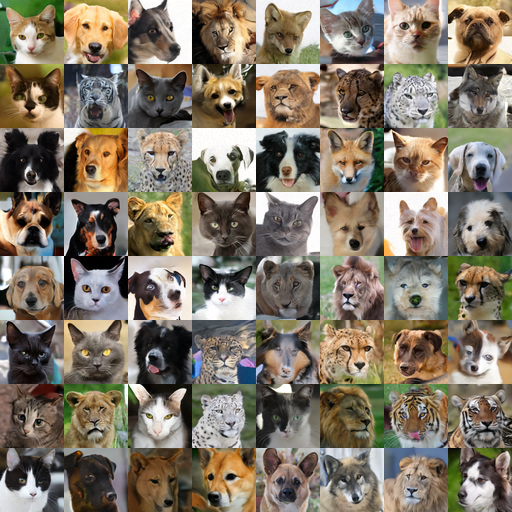}
    \end{minipage} \\ \vspace{1em}
    \begin{minipage}{0.4\textwidth}
        \caption*{AFHQ dataset, $p=0.6, \delta=0.1$}
        \includegraphics[width=\textwidth]{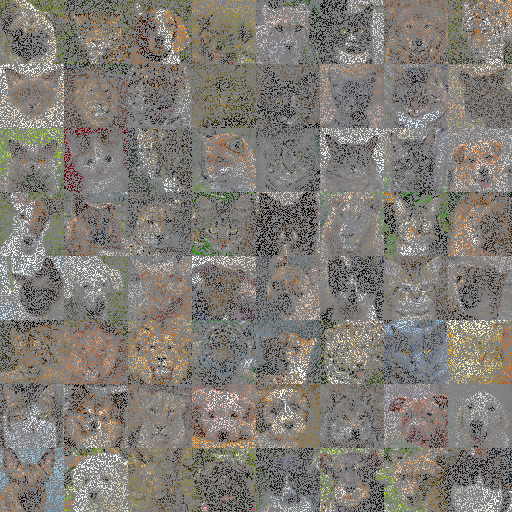}
    \end{minipage}
    \begin{minipage}{0.4\textwidth}
        \caption*{Uncurated samples from our model}
        \includegraphics[width=\textwidth]{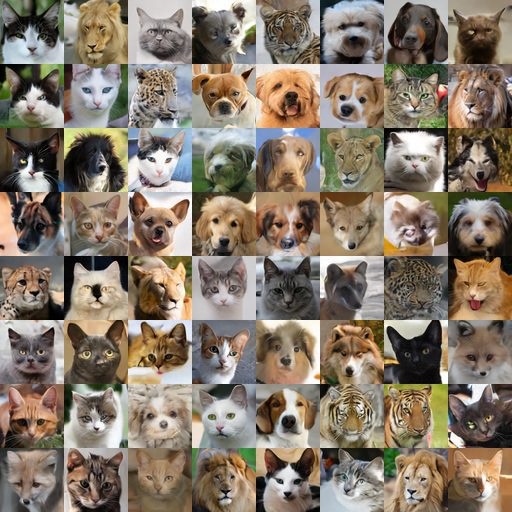}
    \end{minipage} \\ \vspace{1em}
    \begin{minipage}{0.4\textwidth}
        \caption*{AFHQ dataset, $p=0.8, \delta=0.1$}
        \includegraphics[width=\textwidth]{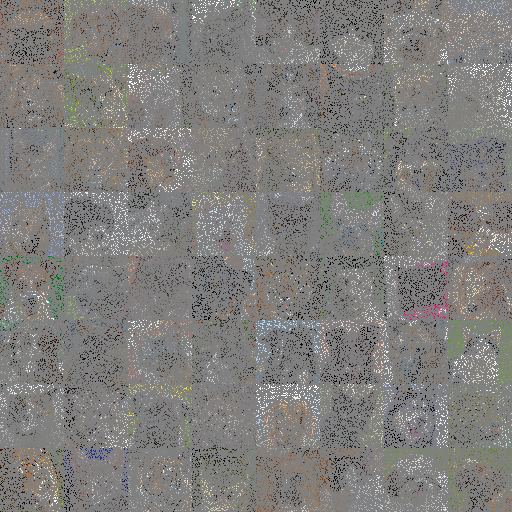}
    \end{minipage}
    \begin{minipage}{0.4\textwidth}
        \caption*{Uncurated samples from our model}
        \includegraphics[width=\textwidth]{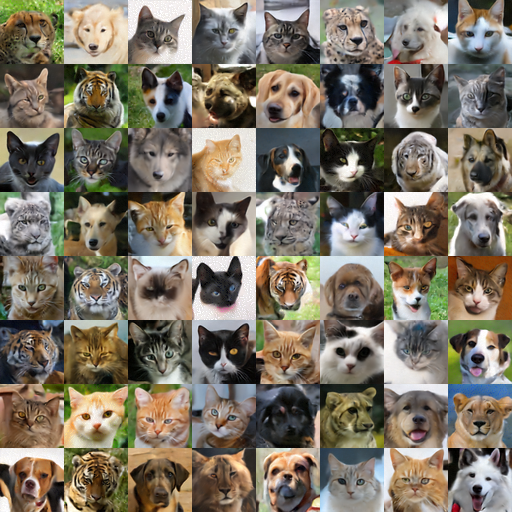}
    \end{minipage}
\end{figure}

\begin{figure}[t!]
    \centering
    \begin{minipage}{0.4\textwidth}
        \caption*{AFHQ dataset, $p=0.9, \delta=0.1$}
        \includegraphics[width=\textwidth]{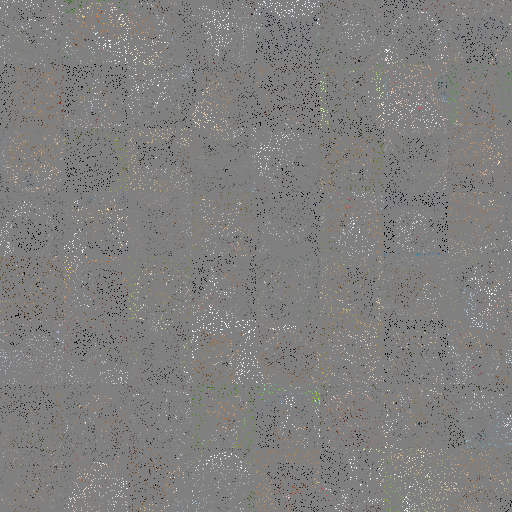}
    \end{minipage}
    \begin{minipage}{0.4\textwidth}
        \caption*{Uncurated samples from our model}
        \includegraphics[width=\textwidth]{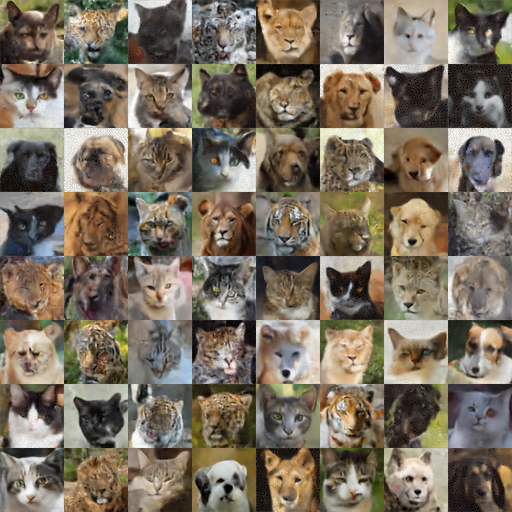}
    \end{minipage}
    \caption{Left column: AFHQ training dataset with random inpainting at different levels of corruption $p$ (the survival probability is $(1-p)\cdot (1-\delta)$). Right column: Unconditional generations from our models trained with the corresponding parameters. As shown, the generations become slightly worse as we increase the level of corruption, but we can reasonably well learn the distribution even with $91\%$ pixels missing (on average) from each training image.}
    \label{fig:afhq_generations}
\end{figure}

\begin{figure}[!htp]
    \centering
    \begin{minipage}{0.28\textwidth}
        \caption*{CIFAR-10 training dataset, $p=0.2, \delta=0.1$}
        \includegraphics[width=\textwidth]{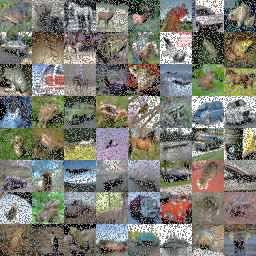}
    \end{minipage}
    \begin{minipage}{0.28\textwidth}
        \caption*{Uncurated samples from our model}
        \includegraphics[width=\textwidth]{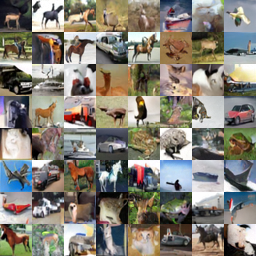}
    \end{minipage} \\ \vspace{1em}
    \begin{minipage}{0.28\textwidth}
        \caption*{CIFAR-10 training dataset, $p=0.4, \delta=0.1$}
        \includegraphics[width=\textwidth]{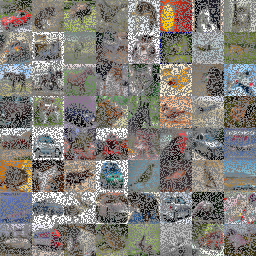}
    \end{minipage}
    \begin{minipage}{0.28\textwidth}
        \caption*{Uncurated samples from our model}
        \includegraphics[width=\textwidth]{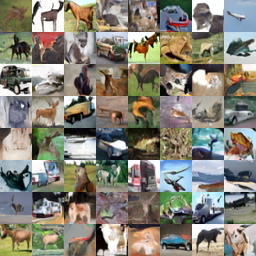}
    \end{minipage} \\ \vspace{1em}
    \begin{minipage}{0.28\textwidth}
        \caption*{CIFAR-10 training dataset, $p=0.6, \delta=0.1$}
        \includegraphics[width=\textwidth]{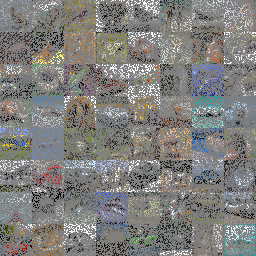}
    \end{minipage}
    \begin{minipage}{0.28\textwidth}
        \caption*{Uncurated samples from our model}
        \includegraphics[width=\textwidth]{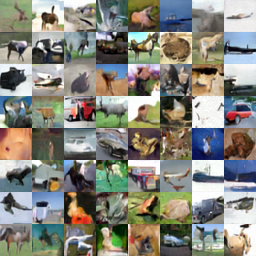}
    \end{minipage} \\ \vspace{1em}
    \begin{minipage}{0.28\textwidth}
        \caption*{CIFAR-10 training dataset, $p=0.8, \delta=0.1$}
        \includegraphics[width=\textwidth]{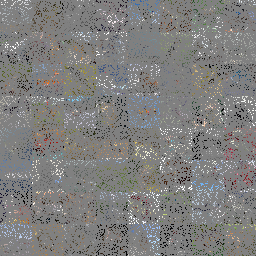}
    \end{minipage}
    \begin{minipage}{0.28\textwidth}
        \caption*{Uncurated samples from our model}
        \includegraphics[width=\textwidth]{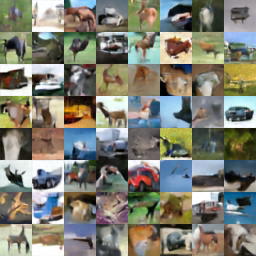}
    \end{minipage}
    \caption{Left column: CIFAR-10 training dataset with random inpainting at different levels of corruption $p$ (the survival probability is $(1-p)\cdot (1-\delta)$). Right column: Unconditional generations from our models trained with the corresponding parameters. As shown, the generations become slightly worse as we increase the level of corruption, but we can reasonably well learn the distribution even with high corruption.}
    \label{fig:cifar_generations}
\end{figure}

\end{document}